\definecolor{lightgreen}{rgb}{0.1,0.5,0.1}
\def\expandafter\normalsize\expandafter{%
    \normalsize%
    \setlength\abovedisplayskip{5pt}%
    \setlength\belowdisplayskip{3pt}%
}
\newtheorem{thm}{\protect\theoremname}
\newtheorem{assump}{\protect\assumptionname}
\providecommand{\claimname}{Claim}
\providecommand{\lemmaname}{Lemma}
\providecommand{\propositionname}{Proposition}
\providecommand{\theoremname}{Theorem}
\providecommand{\corollaryname}{Corollary} 
\providecommand{\definitionname}{Definition}
\providecommand{\assumptionname}{Assumption}
\providecommand{\remarkname}{Remark}
\newcommand{\overbar}[1]{\mkern 1.25mu\overline{\mkern-1.25mu#1\mkern-0.25mu}\mkern 0.25mu}
\newcommand{\openone}{\mathds{1}}
\newcommand{\gv}{\mathbf{g}}
\newcommand{\balpha}{\boldsymbol{\alpha}}
\newcommand{\diag}{\mathrm{diag}}
\newcommand{\Tr}{\mathrm{Tr}}
\newcommand{\Xbar}{\overbar{X}}
\newcommand{\xvbar}{\overbar{\mathbf{x}}}
\newcommand{\xv}{\mathbf{x}}
\newcommand{\Cc}{\mathcal{C}}
\newcommand{\RR}{\mathbb{R}}
\renewcommand{\dot}{\partial_t}
\newcommand{\tr}{\mathrm{tr}}
\newcommand{\vectorize}{\mathrm{vec}}
\newcommand{\Udot}{\dot{U}}
\newcommand{\Vdot}{\dot{V}}
\newcommand{\alphadot}{\dot{\alpha}}
\newcommand{\alphamax}{\alpha_{\textrm{max}}}
\newcommand{\Ubar}{\bar{U}}
\newcommand{\Vbar}{\bar{V}}
\newcommand{\Ubardot}{\dot{\bar{U}}}
\newcommand{\Vbardot}{\dot{\bar{V}}}
\newcommand{\Abar}{\bar{A}}
\newcommand{\ev}{\mathbf{e}}
\newcommand{\balphadot}{\dot{\boldsymbol{\alpha}}}
\newcommand{\Diag}{\mathrm{Diag}}
\newcommand{\row}{\mathrm{row}}
\newcommand{\leqentry}{\leq_{\textrm{e}}}
\begin{document}

\title{Implicit Regularization  via Spectral Neural Networks \protect\\ and Non-linear Matrix Sensing}

\author{Hong T.M. Chu\footnotemark[1] \footnotemark[2]\; \quad Subhro Ghosh \footnotemark[1]\; \footnotemark[3]\;\quad Chi Thanh Lam\footnotemark[1]\; \footnotemark[4]\; \protect \\  \quad Soumendu S. Mukherjee\footnotemark[1]\; \footnotemark[5]\;}

\maketitle

\renewcommand{\thefootnote}{\fnsymbol{footnote}}
\footnotetext[1]{Authors are listed in the alphabetical order of their surnames}
\footnotetext[2]{Dept of Mathematics, National University of Singapore}
\footnotetext[3]{Dept of Mathematics, National University of Singapore}
\footnotetext[4]{School of Computing,National University of Singapore}
\footnotetext[5]{Statistics and Mathematics Unit, Indian Statistical Institute}
\renewcommand{\thefootnote}{\arabic{footnote}}

\begin{abstract}
	The phenomenon of \textit{implicit regularization} has attracted interest in recent years as a fundamental aspect of the remarkable generalizing ability of neural networks. In a nutshell, it entails that gradient descent dynamics in many neural nets, even without any explicit regularizer in the loss function, converges to the solution of a regularized learning problem. However, known results attempting to theoretically explain this phenomenon focus overwhelmingly on the setting of linear neural nets, and the simplicity of the linear structure is particularly crucial to existing arguments. In this paper, we explore this problem in the context of more realistic neural networks with a general class of non-linear activation functions, and rigorously demonstrate the implicit regularization phenomenon for such networks in the setting of matrix sensing problems, together with rigorous rate guarantees that ensure exponentially fast convergence of gradient descent.In this vein, we contribute a network architecture called Spectral Neural Networks (\textit{abbrv.} SNN) that is particularly suitable for matrix learning problems. Conceptually, this entails coordinatizing the space of matrices by their singular values and singular vectors, as opposed to by their entries, a potentially fruitful perspective for matrix learning. We demonstrate that the SNN architecture is inherently much more amenable to theoretical analysis than vanilla neural nets and confirm its effectiveness in the context of matrix sensing, via both mathematical guarantees and empirical investigations. We believe that the SNN architecture has the potential to be of wide applicability in a broad class of matrix learning scenarios.
\end{abstract}

\medskip

\section{Introduction}

A longstanding pursuit of deep learning theory is to explain the astonishing ability of neural networks to generalize despite having far more learnable parameters than training data, even in the absence of any explicit regularization. An established understanding of this phenomenon is that the gradient descent algorithm induces a so-called \emph{implicit regularization} effect. In very general terms, implicit regularization entails that gradient flow dynamics in many neural nets, even without any explicit regularizer in the loss function, converges to the solution of a regularized learning problem. In a sense, this creates a learning paradigm that automatically favors models characterized by ``low complexity''.

A standard test-bed for mathematical analysis in studying implicit regularization in deep learning is the \emph{matrix sensing} problem. The goal is to approximate a matrix $X^\star$ from a set of measurement matrices $A_1,\dots,A_m$ and observations $y_1,\dots,y_m$ where $y_i=\langle A_i, X^\star \rangle$. A common approach, \emph{matrix factorization}, parameterizes the solution as a product matrix, i.e., $X=UV^\top$, and optimizes the resulting non-convex objective to fit the data. This is equivalent to training a depth-$2$ neural network with a linear activation function.


In an attempt to explain the generalizing ability of over-parameterized neural networks, \cite{neyshabur2014search} first suggested the idea of an implicit regularization effect of the optimizer, which entails a bias towards solutions that generalize well. \cite{gunasekar2017implicit} investigated the possibility of an implicit norm-regularization effect of gradient descent in the context of shallow matrix factorization. In particular, they studied the standard Burer-Monteiro approach \cite{burer2003nonlinear} to matrix factorization, which may be viewed as a depth-2 linear neural network. They were able to theoretically demonstrate an implicit norm-regularization phenomenon, where a suitably initialized gradient flow dynamics approaches a solution to the nuclear-norm minimization approach to low-rank matrix recovery \cite{recht2010guaranteed}, in a setting where the involved measurement matrices commute with each other. They also conjectured that this latter restriction on the measurement matrices is unnecessary. This conjecture was later resolved by \cite{li2018algorithmic} in the setting where the measurement matrices satisfy a restricted isometry property. Other aspects of implicit regularization in matrix factorization problems were investigated in several follow-up papers \citep{neyshabur2017exploring, arora2019implicit, razin2020implicit, tarmoun2021understanding, razin2021implicit}. For instance, \cite{arora2019implicit} showed that the implicit norm-regularization property of gradient flow, as studied by \cite{gunasekar2017implicit}, does not hold in the context of deep matrix factorization. \cite{razin2020implicit} constructed  a simple $2 \times 2$ example, where the gradient flow dynamics lead to an eventual blow-up of any matrix norm, while a certain relaxation of rank---the so-called \emph{e-rank}---is minimized in the limit. These works suggest that implicit regularization in deep networks should be interpreted through the lens of rank minimization, not norm minimization. Incidentally, \cite{razin2021implicit} have recently demonstrated similar phenomena in the context of tensor factorization. 

Researchers have also studied implicit regularization in several other learning problems, including linear models \citep{soudry2018implicit, zhao2019implicit, du2019width}, neural networks with one or two hidden layers \citep{li2018algorithmic, blanc2020implicit, gidel2019implicit, kubo2019implicit, saxe2019mathematical}. Besides norm-regularization, several of these works demonstrate the implicit regularization effect of gradient descent in terms of other relevant quantities such as margin \citep{soudry2018implicit}, the number of times the model changes its convexity \citep{blanc2020implicit}, linear interpolation \citep{kubo2019implicit}, or structural bias \citep{gidel2019implicit}. 

A natural use case scenario for investigating the implicit regularization phenomenon is the problem of matrix sensing. Classical  works in matrix sensing and matrix factorization utilize convex relaxation approaches, i.e., minimizing the nuclear norm subject to agreement with the observations, and deriving tight sample complexity bound \citep{srebro2005rank, candes2009exact, recht2010guaranteed, candes2010power, keshavan2010matrix, recht2011simpler}. Recently, many works analyze gradient descent on the matrix sensing problem. \cite{ge2016matrix} and \cite{bhojanapalli2016global} show that the non-convex objectives on matrix sensing and matrix completion with low-rank parameterization do not have any spurious local minima. Consequently, the gradient descent algorithm converges to the global minimum.

Despite the large body of works studying implicit regularization, most of them consider the linear setting. It remains an open question to understand the behavior of gradient descent in the presence of non-linearities, which are more realistic representations of neural nets employed in practice.

In this paper, we make an initial foray into this problem, and investigate the implicit regularization phenomenon in more realistic neural networks with a general class of non-linear activation functions. We rigorously demonstrate the occurrence of an implicit regularization phenomenon in this setting for matrix sensing problems, reinforced with quantitative rate guarantees
ensuring exponentially fast convergence of gradient descent to the best approximation in a suitable class of matrices. Our convergence upper bounds are complemented
by matching lower bounds which demonstrate the optimality of the exponential rate of convergence. 

In the bigger picture, we contribute a network architecture that we refer to as the Spectral Neural Network architecture (abbrv. SNN), which is particularly suitable for matrix learning scenarios. Conceptually, this entails coordinatizing the space of matrices by their singular values and singular vectors, as opposed to by their entries. We believe that this point of view can be beneficial for tackling matrix learning problems in a neural network setup. SNNs are particularly well-suited for theoretical analysis due to the spectral nature of their non-linearities, as opposed to vanilla neural nets, while at the same time provably guaranteeing effectiveness in matrix learning problems.  We also introduce a much more general counterpart of the near-zero initialization that is popular in related literature, and our methods are able to handle a much more robust class of initializing setups that are constrained only via certain inequalities. Our theoretical contributions include a compact analytical representation of the gradient flow dynamics, accorded by the spectral nature of our network architecture. We demonstrate the efficacy of the SNN architecture through its application to the matrix sensing problem, bolstered via both theoretical guarantees and extensive empirical studies. We believe that the SNN architecture has the potential to be of wide applicability in a broad class of matrix learning problems. In particular, we believe that the SNN architecture would be natural for the study of rank (or e-rank) minimization effect of implicit regularization in deep matrix/tensor factorization problems, especially given the fact that e-rank is essentially a spectrally defined concept.

\section{Problem Setup}
Let $X^{\star} \in \RR^{d_1 \times d_2}$ be an unknown rectangular matrix that we aim to recover. 
Let $A_1, \dots, A_m \in \RR^{d_1 \times d_2}$ be $m$ measurement matrices, and the label vector $y \in \RR^m$ is generated by
\begin{equation}
    y_i = \langle A_i, X^{\star} \rangle,
\end{equation}
where $\langle A, B \rangle = \tr(A^\top B)$ denotes the Frobenius inner product. We consider the following squared loss objective
\begin{equation} \label{eq:loss}
    \ell(X) := \dfrac{1}{2} \sum_{i=1}^m (y_i - \langle A_i, X \rangle)^2.
\end{equation}
This setting covers problems including matrix completion (where the $A_i$'s are indicator matrices), matrix sensing from linear measurements, and multi-task learning (in which the columns of $X$ are predictors for the tasks, and $A_i$ has only one non-zero column). We are interested in the regime where $m \ll d_1 \times d_2$, i.e., the number of measurements is much less than the number of entries in $X^{\star}$, in which case \ref{eq:loss} is under-determined with many global minima. Therefore, merely minimizing \ref{eq:loss} does not guarantee correct recovery or good generalization.

Following previous works, instead of working with $X$ directly, we consider a non-linear factorization of $X$ as follows
\begin{equation} \label{eq:X}
    X = \sum_{k=1}^{K} \alpha_k \Gamma(U_k V_k^\top),
\end{equation}
where $\alpha \in \RR$, $U_k \in \RR^{d_1 \times d}, V_k \in \RR^{d_2 \times d}$, and the matrix-valued function $\Gamma:\RR^{d_1 \times d_2} \rightarrow \RR^{d_1 \times d_2}$ transforms a matrix by applying a nonlinear real-valued function $\gamma(\cdot)$ on its singular values. We focus on the over-parameterized setting $d \geq d_2 \geq d_1$, i.e., the factorization does not impose any rank constraints on $X$. Let $\alpha = \{ \alpha_1, \dots, \alpha_K \}$ be the collection of the $\alpha_k$'s. Similarly, we define $U$ and $V$ to be the collections of $U_k$'s and $V_k$'s.

\subsection{Gradient Flow}

For each $k \in [K]$, let $\alpha_k(t), U_k(t), V_k(t)$ denote the trajectories of gradient flow, where $\alpha_k(0), U_k(0), V_k(0)$ are the initial conditions. Consequently, $X(t)=\sum_{k=1}^{K} \alpha_k(t) \Gamma(U_k(t) V_k(t)^\top)$. The dynamics of gradient flow is given by the following differential equations, for $k \in [K]$
\begin{align} \nonumber
    \alphadot_k = -\nabla_{\alpha_k} \ell(X(t)), \\ \nonumber
    \Udot_k = -\nabla_{U_k} \ell(X(t)), \\ \label{eq:gradient-flow}
    \Vdot_k = -\nabla_{V_k} \ell(X(t)).
\end{align}

\section{The SNN architecture}
\label{sec:snn-architecture}

In this work, we contribute a novel neural network architecture, called the Spectral Neural Network (\textit{abbrv.} SNN), that is particularly suitable for matrix learning problems. At the fundamental level, the SNN architecture entails an application of a non-linear activation function on a matrix-valued input in the spectral domain. This may be followed by a linear combination of several such spectrally transformed matrix-structured data. 

To be precise, let us focus on an elemental neuron, which manipulates a single matrix-valued input $X$. If we have a singular value decomposition $X = \Phi \bar{X} \Psi^\top$, where $\Phi,\Psi$ are orthogonal matrices and $\bar{X}$ is the diagonal matrix of singular values of $X$. Let $\gamma$ be any activation function of choice. Then the elemental neuron acts on $X$ as follows : 
\begin{equation}
    X \mapsto \Phi ~ \gamma(\bar{X}) ~ \Psi^\top,
\end{equation}
where $\gamma(\bar{X})$ is a diagonal matrix with the non-linearity $\gamma$ applied entrywise to the diagonal of $\bar{X}$. 

\begin{figure}[!h]
    \centering
    \includegraphics[width=\columnwidth]{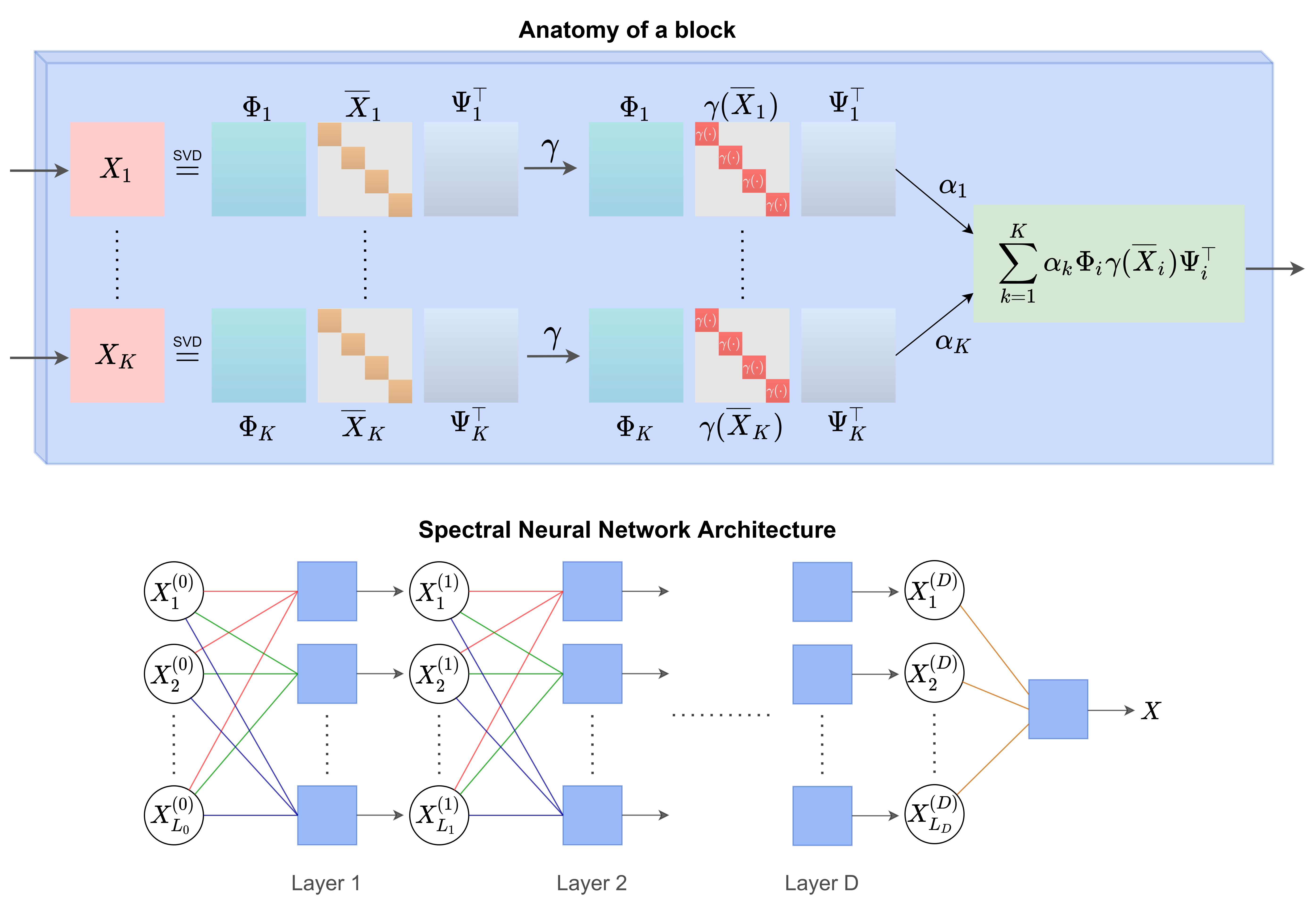}
    \caption{Visualization of the anatomy of an SNN block and a depth-$D$ SNN architecture. Each SNN block takes as input $K$ matrices and outputs one matrix, both the input and output matrices are of size $\mathbb{R}^{d_1 \times d_2}$.
    In layer $i$ of the SNN, there are $L_i$ blocks, which aggregate matrices from the previous layers to produce $L_i$ output matrices as inputs for the next layer. The number of input matrices to a block equals the number of neurons in the previous layer. For example, blocks in layer 1 have $K=L_0$, blocks in layer 2 have $K=L_1$, and blocks in layer $i$ have $K=L_{i-1}$.}
    \label{fig:snn-architecture}
\end{figure}

A \textit{block} in the SNN architecture comprises of $K \ge 1$  elemental neurons as above, taking in $K$ matrix-valued inputs $X_1,\ldots,X_K$. Each input matrix $X_i$ is then individually operated upon by an elemental neuron, and finally, the resulting matrices are aggregated linearly to produce a matrix-valued output for the block. The coefficients of this linear combination are also parameters in the SNN architecture, and are to be learned during the process of training the network. 

The comprehensive SNN architecture is finally obtained by combining such blocks into multiple layers of a deep network, as illustrated in \ref{fig:snn-architecture}. 

\section{Main Results}
For the purposes of theoretical analysis, in the present article, we specialize the SNN architecture to focus on the setting of (quasi-) commuting measurement matrices $A_i$ and spectral near zero initialization; c.f. Assumptions \ref{assump:1} and \ref{assump:2} below. Similar settings have attracted considerable attention in the literature, including the foundational works of \cite{gunasekar2017implicit} and \cite{arora2019implicit}. Furthermore, our analysis holds under very general analytical requirements on the activation function $\gamma$; see Assumption \ref{assump:3} in the following.

\begin{assump} \label{assump:1}
The measurement matrices $A_1, \dots, A_m$ share the same left and right singular vectors. Specifically, there exists two orthogonal matrices $\Phi \in \RR^{d_1 \times d_1}$ and $\Psi \in \RR^{d_2 \times d_2}$, and a sequence of (rectangular) diagonal matrices \footnote{Rectangular diagonal matrices arise in the singular value decomposition of rectangular matrices, see Appendix~D.} $\Abar_1, \dots, \Abar_m \in \RR^{d_1 \times d_2}$ such that for any $i \in [m]$, we have
\begin{equation}\label{eq:assump-1a}
    A_i = \Phi \Abar_i \Psi^\top.
\end{equation}
Let $\sigma^{(i)}$ be the vector containing the singular values of $A_i$, i.e., $\sigma^{(i)}=\diag(\Abar_i)$. Furthermore, we assume that there exist real coefficients $a_1,\dots,a_m$ that
\begin{equation}\label{eq:assump-1b}
    a_1 \sigma^{(1)} + \dots + a_m \sigma^{(m)} = \openone.
\end{equation}
We let $\Xbar^{\star}=\Phi^\top X^{\star}\Psi$ and $\sigma^{\star}$ be the vector containing the diagonal elements of $\Xbar^{\star}$, i.e., $\sigma^\star=\diag(\Xbar^\star)$.
Without loss of generality, we may also assume that $\sigma^\star$ is coordinatewise non-zero. This can be easily ensured by adding the rectangular identity matrix (c.f. Appendix~D) $c I_{d_1 \times d_2}$ to $X^*$ for some large enough positive number $c$.
\end{assump}

Eq. \ref{eq:assump-1a} postulates that the measurement matrices share the same (left- and right-) singular vectors. This holds if and only if the measurement matrices pair-wise \emph{quasi-commute} in the sense that for any $i, j \in [m]$, we have
\begin{equation}\label{eq:quasi-comm}
    A_i A_j^\top = A_j A_i^\top, \quad
    A_i^\top A_j = A_j^\top A_i.
\end{equation}
A natural class of examples of such quasi-commuting measurement matrices comes from families of commuting projections. In such a scenario Eq. \ref{eq:assump-1b} stipulates that these projections cover all the coordinate directions, which may be related conceptually to a notion of the measurements being sufficiently informative. For example, in this setting, Eq. \ref{eq:assump-1b} would entail that the trace of $X^\star$ can be directly computed on the basis of the measurements.

Eq. \ref{eq:assump-1b} acts as a regularity condition on the singular values of the measurement matrices. For example, it prohibits peculiar scenarios where $\sigma^{(i)}_1 = 0$ for all $i$, i.e., all measurement matrices have $0$ as their smallest singular values, which makes it impossible to sense the smallest singular value of $X^\star$ from linear measurements.


Note that
\begin{align}
    y_i &= \langle A_i, X^{\star} \rangle = \Tr(A_i^\top X^{\star})\\
    &= \Tr(\Psi\Abar_i\Phi^\top X^{\star}) = \Tr(\Abar_i\Phi^\top X^{\star} \Psi) = \langle \Abar_i, \Phi^\top X^{\star} \Psi \rangle,
\end{align}
where in the above we use the fact that $\Abar_i=\Abar_i^\top$ (since $\Abar_i$ is diagonal) and the cyclic property of trace. We have
\begin{align} \label{eq:y_i}
    y_i = \langle \Abar_i, \Xbar^{\star} \rangle = \langle \sigma^{(i)}, \sigma^{\star} \rangle = \sigma^{(i)\top} \sigma^{\star},
\end{align}
where the second equality is due to $\Abar_i$ being diagonal. 

We further define vectors $z^{(k)}$ and three matrices $B$, $\Cc$, and $H$ as follows
\begin{align*}
    z_i^{(k)} &= [\Ubar_k]_{ii} [\Vbar_k]_{ii} \\
    B &= \begin{bmatrix} \sigma^{(1)} \, | \, \dots \, | \, \sigma^{(m)} \end{bmatrix} \in \RR^{d_1 \times m} \\
    \Cc &= B B^\top \in \RR^{d_1 \times d_1} \\
    H &= \begin{bmatrix} \gamma(z^{(1)}) \, | \, \dots \, | \, \gamma(z^{(K)}) \end{bmatrix} \in \RR^{d_1 \times K}.
\end{align*}
Under these new notations, we can write the label vector $y$ as $y = B^\top \sigma^{\star}$.

\begin{assump} \label{assump:2}
    \textbf{(Spectral Initialization)} 
    Let $\Phi$ and $\Psi$ be the matrices containing the left and right singular vectors of the measurement matrices from Assumption \ref{assump:1}. Let $G \in \RR^{d \times d}$ is any arbitrary orthogonal matrix. We initialize $X(0)$ such that the following condition holds: for any $k=1,\dots,K$, we have
    \begin{enumerate}[label=(\alph*)]
        \item $U_k(0) = \Phi \Ubar_k(0) G$ and $V_k = \Psi \Vbar(0) G$, and 
        \item $\Ubar_k(0)$ and $\Vbar_k(0)$ are diagonal, and
        \item $\sum_{k=1}^{K} \alpha_k \gamma(\Ubar_k(0)_{ii}\Vbar_k(0)_{ii}) \leq \sigma^\star_i$ for any $i=1,\dots,d_1$.
    \end{enumerate}
\end{assump}
Assumption~\ref{assump:2}, especially part (c) therein, may be thought of as a much more generalized  counterpart of the ``near-zero'' initialization which is widely used in the related literature (\cite{gunasekar2017implicit, li2018algorithmic, arora2019implicit}).
A direct consequence of Assumption~\ref{assump:2} is that at initialization, the matrix $X(0)$ has $\Phi$ and $\Psi$ as its left and right singular vectors. As we will see later, this initialization imposes a distinctive structure on the gradient flow dynamics, allowing for an explicit analytical expression for the flow of each component of $X$.

\begin{assump} \label{assump:3}
    The function $\gamma: \RR \rightarrow \RR$ is bounded between $[0, 1]$, and is differentiable and non-decreasing on $\RR$.
\end{assump}
Assumption~$\ref{assump:3}$ imposes regularity conditions on the non-linearity $\gamma$. Common non-linearities that are used in deep learning such as Sigmoid, ReLU or tanh satisfy the differentiability and non-decreasing conditions, while the boundedness can be achieved by truncating the outputs of these functions if necessary.

Our first result provides a compact representation of the gradient flow dynamics in suitable coordinates. The derivation of this dynamics involves matrix differentials utilizing the Khatri-Rao product $\Psi \boxtimes \Phi$ of the matrices $\Psi$ and $\Phi$ (see Eq.~\ref{eq:khatri-rao} in Appendix~\ref{app:proof-thm-gradient-flow} of the supplement).

\begin{thm} \label{thm:gradent-flow}
    Suppose Assumptions \ref{assump:1}, \ref{assump:2}, \ref{assump:3} hold. Then the gradient flow dynamics in \ref{eq:gradient-flow} are
    \begin{align*}
        \balphadot &= H^\top \Cc (\sigma^\star - H \alpha), \\
        \Udot_k &= \Phi L_k \Psi^\top V_k, \text{ and} \\
        \Vdot_k &= (\Phi L_k \Psi^\top)^\top U_k,
    \end{align*}
    where $L_k \in \RR^{d_1 \times d_2}$ is a diagonal matrix whose diagonal is given by
    \begin{align} \nonumber
        \diag(L_k) = \lambda^{(k)} &= [\lambda_1^{(k)}, \dots, \lambda_{d_1}^{(k)}]^\top \\\label{eq:lambda}
        &= \alpha_k \gamma^\prime(z^{(k)}) \circ \Cc (\sigma^\star - H \alpha).
    \end{align}
\end{thm}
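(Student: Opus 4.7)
The plan is to compute all three gradient components via direct matrix calculus, using the fact that Assumptions \ref{assump:1}--\ref{assump:2} force every relevant object to share the singular-vector bases $\Phi, \Psi$. The spectral initialization of Assumption \ref{assump:2} gives $U_k(0) V_k(0)^\top = \Phi \Ubar_k(0) \Vbar_k(0)^\top \Psi^\top$ with the middle factor rectangular diagonal, so $\Gamma(U_k V_k^\top) = \Phi\, \gamma(\Ubar_k \Vbar_k^\top)\, \Psi^\top$. The candidate dynamics asserted in the theorem take the form $\Udot_k = \Phi\cdot(\text{diag})\cdot G$ and $\Vdot_k = \Psi\cdot(\text{diag})\cdot G$, so they preserve this spectral structure, and by uniqueness of the gradient flow it suffices to verify the stated formulas against $-\nabla \ell$ under the structural ansatz. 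Under the ansatz one has $\langle A_i, X\rangle = (\sigma^{(i)})^\top H\alpha$ and $\langle A_i, \Gamma(U_k V_k^\top)\rangle = (\sigma^{(i)})^\top \gamma(z^{(k)})$, so the residual collapses to the scalar form $r_i = (\sigma^{(i)})^\top(\sigma^\star - H\alpha)$.

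The $\alpha$-component is then routine. Chain rule gives $\dot\alpha_k = \sum_i r_i (\sigma^{(i)})^\top \gamma(z^{(k)}) = \gamma(z^{(k)})^\top \sum_i \sigma^{(i)} r_i = \gamma(z^{(k)})^\top B B^\top(\sigma^\star - H\alpha)$, and stacking over $k$ while recognizing $BB^\top = \Cc$ produces $\dot{\boldsymbol{\alpha}} = H^\top \Cc(\sigma^\star - H\alpha)$.

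The substantive step is the Fr\'echet derivative of the spectral map $\Gamma$ at a structured point $M = \Phi \Mbar \Psi^\top$. The plan is to perturb $M \mapsto M + dM$ and decompose $\Phi^\top dM \Psi = dS + dR$ into its diagonal and off-diagonal parts. First-order SVD perturbation theory identifies $dR$ with infinitesimal skew-symmetric rotations $dU', dV'$ of $\Phi, \Psi$ (solved via the standard $2\times 2$ system coupling $[dU']_{ij}, [dV']_{ij}$ to $[dR]_{ij}, [dR]_{ji}$), while $dS$ perturbs the singular values themselves, giving
\begin{equation*}
d\Gamma(M) = \Phi\bigl[dU'\,\gamma(\Mbar) - \gamma(\Mbar)\,dV' + \gamma'(\Mbar)\circ dS\bigr]\Psi^\top.
\end{equation*}
Pairing against $A_i = \Phi \Abar_i \Psi^\top$, the cyclic property of trace converts the two rotational terms into traces of diagonal matrices (built from $\gamma(\Mbar)\Abar_i^\top$ or $\Abar_i^\top\gamma(\Mbar)$) against the skew-symmetric $dU', dV'$, which vanish since skew-symmetric matrices have zero diagonal. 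Only the $dS$ term survives, yielding the key identity $\nabla_M \langle A_i, \Gamma(M)\rangle = \Phi\, \diag\bigl(\sigma^{(i)} \circ \gamma'(z^{(k)})\bigr)\, \Psi^\top$.

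Chain rule through $M = U_k V_k^\top$ then gives $\nabla_{U_k} \langle A_i, \Gamma(M)\rangle = \Phi \diag(\sigma^{(i)} \circ \gamma'(z^{(k)})) \Psi^\top V_k$ and symmetrically for $V_k$. Substituting into $\Udot_k = -\nabla_{U_k}\ell = \alpha_k \sum_i r_i\, \Phi\diag(\sigma^{(i)}\circ\gamma'(z^{(k)}))\Psi^\top V_k$ and collapsing $\sum_i r_i \sigma^{(i)} = Br = BB^\top(\sigma^\star - H\alpha) = \Cc(\sigma^\star - H\alpha)$ reconstructs exactly the diagonal vector $\lambda^{(k)}$ of \ref{eq:lambda}, yielding $\Udot_k = \Phi L_k \Psi^\top V_k$ and $\Vdot_k = (\Phi L_k \Psi^\top)^\top U_k$. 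The main obstacle is the rectangular SVD perturbation analysis and confirming the cancellation of the skew-symmetric rotational contributions; once that cancellation is established, the remainder is algebraic bookkeeping, organized naturally by the Khatri-Rao identity $\vectorize(\Phi\Xbar\Psi^\top) = (\Psi\boxtimes\Phi)\,\diag(\Xbar)$ that the authors highlight.
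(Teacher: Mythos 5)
Your proof is essentially the paper's: both compute the Fr\'echet derivative of the spectral map $\Gamma$ via first-order SVD perturbation, reduce to diagonal $(\Phi,\Psi)$-coordinates (the paper organizes this via the Khatri--Rao identity $\xv = \Theta\xvbar$), and collapse the measurement sum through $BB^\top = \Cc$. The one step you handle more explicitly is the perturbation of $\Gamma$ itself: you decompose $\Phi^\top dM\,\Psi$ into diagonal and off-diagonal parts and show the skew rotational contributions vanish when traced against the diagonal $\Abar_i$, whereas the paper compresses this into the one-line formula $d\xvbar_i = \langle\phi_i\psi_i^\top, dX\rangle$ followed by $d\xv=\Theta\,d\xvbar$ --- a relation which, for an arbitrary perturbation of $Z_k$, holds only after pairing against the loss gradient $G$ (itself diagonal in the $(\Phi,\Psi)$-basis), which is exactly the cancellation your skew-symmetry argument makes precise.
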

\begin{proof} (Main ideas -- full details in Appendix~\ref{app:proof-thm-gradient-flow}).
We leverage the fact that the non-linearity $\gamma(\cdot)$ only changes the singular values of the product matrix $U_k V_k^\top$ while keeping the singular vectors intact. Therefore, the gradient flow in \ref{eq:gradient-flow} preserves the left and right singular vectors. Furthermore, by Assumption~\ref{assump:2}, $U_k V_k^\top$ has $\Phi$ and $\Psi$ as its left and right singular vectors at initialization, which remains the same throughout. This property also percolates to $\nabla_{U_k V_k^\top} \ell$. Mathematically speaking, $\nabla_{U_k V_k^\top} \ell$ becomes diagonalizable by $\Phi$ and $\Psi$, i.e.,
\begin{equation*}
        \nabla_{U_k V_k^\top} \ell = \Phi \Lambda_k \Psi^\top
\end{equation*}
for some diagonal matrix $\Lambda_k$. It turns out that $\Lambda_k = L_k$ as given in the statement of the theorem. In view of Eq. \ref{eq:gradient-flow}, this explains the expressions for $\Udot_k$ and $\Vdot_k$. Finally, since $\alpha_k$ is a scalar, the partial derivative of $\ell$ with respect to $\alpha_k$ is relatively straightforward to compute.
\end{proof}

Theorem~\ref{thm:gradent-flow} provides closed-form expressions for the dynamics of the individual components of $X$, namely $\alpha_k, U_k$ and $V_k$. We want to highlight that the compact analytical expression and the simplicity of the gradient flow dynamics on the components are a direct result of the spectral non-linearity. In other words, if we use the conventional element-wise non-linearity commonly used in deep learning, the above dynamics will be substantially more complicated, containing several Hadamard products and becoming prohibitively harder for theoretical analysis.

As a direct corollary of Theorem~\ref{thm:gradent-flow}, the gradient flow dynamics on $\Ubar_k$ and $\Vbar_k$ are
\begin{equation} \label{eq:dynamics-uvbar}
    \Ubardot_k = L_k \Vbar_k, \quad
    \Vbardot_k = L_k^\top \Ubar_k.
\end{equation}
Under Assumption~\ref{assump:2}, $\Ubar_k(0)$ and $\Vbar_k(0)$  are diagonal matrices. From the gradient flow dynamics in Eq. \ref{eq:dynamics-uvbar}, and recalling that the $L_k$'s are diagonal, we infer that $\Ubardot_k(0)$ and $\Vbardot_k(0)$ are also diagonal. Consequently, $\Ubar_k(t)$ and $\Vbar_k(t)$ remain diagonal for all $t\geq 0$ since the gradient flow dynamics in Eq. \ref{eq:dynamics-uvbar} does not induce any change in the off-diagonal elements. Thus, $\Ubar_k(t) \Vbar(t)_k^\top$ also remains diagonal throughout.

A consequence of the spectral initialization is that the left and right singular vectors of $X(t)$ stay constant at $\Phi$ and $\Psi$ throughout the entire gradient flow procedure. To this end, the gradient flow dynamics is completely determined by the evolution of the singular values of $X(t)$, i.e., $H \alpha$. The next result characterizes the convergence of the singular values of $X(t)$.

\begin{thm} \label{prop:convergence-rate}
    Under Assumptions~\ref{assump:1} and \ref{assump:2}, for any $i=1,\dots,d_1$, there are constants $\eta_i,C_i>0$ such that we have :
    \begin{equation*}
      0 \le \sigma^\star_i - (H(t)\alpha(t))_i \le C_i e^{- \eta_i t}.
    \end{equation*}
    On the other hand, we have the lower bound 
    \begin{equation*}
        \|\sigma^\star - (H(t)\alpha(t)) \|_2 \ge C e^{-\eta t},
    \end{equation*}
    for some constants $\eta,C>0$.
\end{thm}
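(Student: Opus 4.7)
Proof proposal: Let $e(t) := \sigma^\star - H(t)\alpha(t) \in \RR^{d_1}$, so the theorem asks for $0 \le e_i(t) \le C_i e^{-\eta_i t}$ componentwise and $\|e(t)\|_2 \ge C e^{-\eta t}$. The plan has three ingredients: derive a compact matrix ODE for $e(t)$ from Theorem~\ref{thm:gradent-flow}; establish matching two-sided exponential bounds on a loss-based Lyapunov functional; and separately handle componentwise non-negativity via an invariance argument.

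Starting from Theorem~\ref{thm:gradent-flow} and the fact (noted immediately after that theorem) that $\bar U_k(t), \bar V_k(t)$ stay diagonal throughout the flow, the dynamics reduce to scalar ODEs on the diagonal entries $u_i^{(k)} := [\bar U_k]_{ii}$ and $v_i^{(k)} := [\bar V_k]_{ii}$: $\dot u_i^{(k)} = \lambda_i^{(k)} v_i^{(k)}$, $\dot v_i^{(k)} = \lambda_i^{(k)} u_i^{(k)}$, and $\dot\alpha_k = [H^\top \mathcal{C} e]_k$, with $z_i^{(k)} = u_i^{(k)} v_i^{(k)}$ and $\lambda_i^{(k)} = \alpha_k \gamma'(z_i^{(k)})[\mathcal{C} e]_i$. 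Differentiating $(H\alpha)_i = \sum_k \gamma(z_i^{(k)})\alpha_k$ and substituting produces the compact form
\begin{equation*}
  \dot e(t) \;=\; -\bigl(H(t)H(t)^\top + M(t)\bigr)\,\mathcal{C}\,e(t),
\end{equation*}
where $M(t) = \operatorname{diag}(M_{ii}(t))$ with $M_{ii}(t) = \sum_k \alpha_k^2\,\gamma'(z_i^{(k)})^2\bigl((u_i^{(k)})^2 + (v_i^{(k)})^2\bigr) \ge 0$. Both $HH^\top + M$ and $\mathcal{C}$ are PSD and entrywise non-negative (the latter using $\gamma \ge 0$ from Assumption~\ref{assump:3}), and Assumption~\ref{assump:1}(b) ensures $\lambda_{\min}(\mathcal{C}) > 0$ after a mild non-degeneracy reduction restricting to $\operatorname{range}(\mathcal{C})$.

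With the Lyapunov functional $V(t) := \tfrac12\, e(t)^\top \mathcal{C} e(t) = \ell(X(t))$, a direct computation gives $\dot V(t) = -(\mathcal{C} e)^\top (HH^\top + M)(\mathcal{C} e)$, which is sandwiched between $-\lambda_{\max}(HH^\top + M)\|\mathcal{C} e\|_2^2$ and $-\lambda_{\min}(HH^\top + M)\|\mathcal{C} e\|_2^2$. Combined with $\lambda_{\min}(\mathcal{C})\|e\|_2^2 \le e^\top \mathcal{C} e \le \lambda_{\max}(\mathcal{C})\|e\|_2^2$, this yields the two-sided differential inequality $-\eta' V \le \dot V \le -\eta V$, provided one has uniform-in-time spectral bounds $0 < \mu_- \le \lambda_{\min}(HH^\top + M)$ and $\lambda_{\max}(HH^\top + M) \le \mu_+ < \infty$. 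These in turn follow from (i) the conservation identity $(u_i^{(k)}(t))^2 - (v_i^{(k)}(t))^2 = (u_i^{(k)}(0))^2 - (v_i^{(k)}(0))^2$ obtained by subtracting the $u,v$ ODEs; (ii) monotonicity $\dot z_i^{(k)} \ge 0$ on the invariant region $\{e \ge 0\}$, which by Assumption~\ref{assump:3} keeps $\gamma'(z_i^{(k)})$ in a positive, bounded range; (iii) the bound $|\gamma|\le 1$; and (iv) an a priori bound on each $\alpha_k(t)$ obtained by integrating $|\dot \alpha_k| \le \|H\|\,\|\mathcal{C} e\|_2$ against the exponential decay of $\mathcal{C} e$. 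Integrating $-\eta' V \le \dot V \le -\eta V$ gives $V(0) e^{-\eta' t} \le V(t) \le V(0) e^{-\eta t}$, and passing to $\|e\|_2^2$ via the spectrum of $\mathcal{C}$ produces both the coordinate upper bound $|e_i(t)| \le \|e(t)\|_2 \le C_i e^{-\eta_i t}$ and the matching $\ell^2$ lower bound $\|e(t)\|_2 \ge C e^{-\eta t}$.

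The hardest part of the program is the componentwise non-negativity $e_i(t) \ge 0$. Assumption~\ref{assump:2}(c) guarantees $e(0) \ge 0$, but because $(HH^\top + M)\mathcal{C}$ has entrywise non-negative entries, its negation fails to be Metzler, and the linear(ized) semigroup need not preserve the non-negative orthant. My plan is a first-crossing argument: suppose for contradiction that $t_0$ is the earliest time at which some $e_{i_0}(t_0) = 0$ while $e_j(t_0) \ge 0$ for all $j$, and derive a contradiction by showing $\dot e_{i_0}(t_0) \ge 0$. The argument must exploit the full nonlinear structure of the flow -- in particular the saturation $\gamma \le 1$ together with the coordinated growth of $\alpha_k$ and $z_{i_0}^{(k)}$ driven by the common factor $[\mathcal{C} e]_{i_0}$, which vanishes to appropriate order at the boundary -- so that $(H\alpha)_{i_0}$ cannot cross its saturation value $\sigma^\star_{i_0}$. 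Making this invariance quantitative, with a rate feeding back into the constants $C_i, \eta_i$ in the componentwise bound above, is where I expect the bulk of the technical work to lie and is the main obstacle of the proof.
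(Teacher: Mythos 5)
Your reformulation $\dot e = -(HH^\top + M)\,\mathcal{C}\,e$ (with a missing factor of $2$ relative to Eq.~\eqref{eq:Halphadot}, since $(u_i^{(k)})^2 + (v_i^{(k)})^2 = 2w_i^{(k)}$) and the Lyapunov functional $V = \tfrac12 e^\top\mathcal{C}e$ is a genuinely different route from the paper, which never controls the spectrum of $HH^\top + M$ at all. Unfortunately this is precisely where your argument breaks. You need $\lambda_{\min}(HH^\top + M) \ge \mu_- > 0$ \emph{uniformly in time}, but this can fail: $HH^\top$ is rank-deficient whenever $K < d_1$, $M_{ii}$ vanishes wherever $\gamma'(z_i^{(k)}) = 0$ (permitted by Assumption~\ref{assump:3}, e.g.\ truncated ReLU), and if some $\alpha_j \to \infty$ with $H_{ij} \to 0$ (the scenario the paper isolates as Case~4 and which forces an \emph{additional} technical hypothesis $\gamma' \ge c\gamma$ near zeros of $\gamma$), the rows $h_i$ can degenerate. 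The paper deliberately sidesteps the spectral bound: using $e \ge 0$ and entrywise non-negativity of $H$ and $B$, it lower-bounds $HH^\top\mathcal{C}\,e$ \emph{entrywise} by $\operatorname{Diag}(h_i^\top h_i)\operatorname{Diag}(b_i^\top b_i)\,e$ (Eq.~\eqref{eq:lowerbound-case1}), then establishes only that the diagonal entries $h_i^\top h_i$ are bounded away from $0$ via Cauchy--Schwarz, $\|h_i\|\|\alpha\| \ge |h_i^\top\alpha| \ge |\sigma_i^\star| - \epsilon$. That is a far weaker requirement than a spectral gap for $HH^\top + M$, and it is all that is actually available.

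Two further gaps: (i) your item~(iv), an a priori bound on $\alpha_k(t)$ obtained ``by integrating $|\dot\alpha_k|\le\|H\|\|\mathcal{C}e\|_2$ against the exponential decay of $\mathcal{C}e$,'' is circular --- the decay is exactly what you are trying to prove; the paper instead takes the boundedness of $\alpha_k$ as a \emph{case split} and separately handles divergent $\alpha_j$ (Cases~2--4), which your argument does not engage with. (ii) You defer the componentwise invariance $e(t)\ge 0$ to a first-crossing argument that you acknowledge is incomplete, yet you also \emph{use} non-negativity (items~(ii)--(iii)) to obtain the spectral bounds; so the logical order has to be reversed, with non-negativity established first. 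The paper treats $e\ge 0$ via the observation that every term of $\dot{(H\alpha)}$ is a non-negative functional of $\mathcal{C}e$ and that $\dot{(H\alpha)}$ vanishes exactly at $H\alpha = \sigma^\star$, rather than a boundary-crossing analysis. Your lower bound via $\dot V \ge -\eta'V$ is in the right spirit and close to the paper's (which uses $\sum_i \beta_i$ rather than $e^\top\mathcal{C}e$), but it inherits the same unjustified uniform boundedness of $\alpha_k$ that the paper flags as an explicit simplifying assumption for the lower bound only.
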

\begin{proof} (Main ideas -- full details in Appendix~\ref{app:proof-prop-convergence-rate}).
By part (c) of Assumption~\ref{assump:2}, at initialization, we have that $H(0)\alpha(0) \leqentry \sigma^{\star}$, in which the symbol $\leqentry$ denotes the element-wise less than or equal to relation. Therefore, to prove Theorem~\ref{prop:convergence-rate}, it is sufficient to show that $H(t)\alpha(t)$ is increasing to $\sigma^{\star}$ element-wise at an exponential rate. To achieve that, we show that the evolution of $H \alpha$ over time can be expressed as
\begin{align*}
    \dot{(H \alpha)} = 4\sum_{k=1}^{K} \alpha_k^2 \cdot \Big(\gamma^\prime(&z^{(k)})^2 \circ \Cc (\sigma^\star - H \alpha) \\
    &\circ \big(\int \lambda^{(k)} \circ z^{(k)} dt
    + C^{(k)}\big)\Big) \\
    &\quad\,\,\qquad+ H H^\top \Cc (\sigma^\star - H \alpha).
\end{align*}
By definition, the matrix $B$ contains the singular values of the $A_i$'s, and therefore its entries are non-negative. Consequently, since $\Cc = BB^\top$, the entries of $\Cc$ are also non-negative. By Assumption~\ref{assump:3}, we have that $\gamma(\cdot)\in[0,1]$, thus the entries of $H$ are non-negative. Finally, by Assumption~\ref{assump:2}, we have $H(0)\alpha(0) < \sigma^\star$ entry-wise. For these reasons, each entry in $\dot{(H\alpha)}$ is non-negative at initialization, and indeed, for each $i$, the quantity $(H \alpha)_i$ is increasing as long as $(H \alpha)_i < \sigma^*_i$ . As $H \alpha$ approaches $\sigma^\star$, the gradient $\dot{(H \alpha)}$ decreases. If it so happened that $H \alpha = \sigma^\star$ at some finite time, then $\dot{(H \alpha)}$ would exactly equal $0$, which would then cause $H \alpha$ to stay constant at $\sigma^*$ from then on. 

Thus, each $(H \alpha)_i$ is non-decreasing and bounded above by $\sigma^*_i$, and therefore must converge to a limit $\le \sigma^*_i$. If this limit was strictly smaller than $\sigma^*_i$, then by the above argument $(H \alpha)_i$ would be still increasing, indicating that this cannot be the case.  Consequently, we may deduce that
\begin{align*}
    \lim_{t \rightarrow \infty} H(t) \alpha(t) = \sigma^\star.
\end{align*}

It remains to show that the convergence is exponentially fast. To achieve this, we show in the detailed proof that each entry of $\dot{H\alpha}$ is not only non-negative but also bounded away from $0$, i.e.,
\begin{align*}
    \dot{(H \alpha)}_i \geq \eta_i (\sigma^\star_i - (H \alpha)_i),
\end{align*}
for some constant $\eta_i > 0$. This would imply that $H \alpha$ converges to $\sigma^{\star}$ at an exponential rate.
\end{proof}
The limiting matrix output by the network is, therefore, $\Phi \Diag(\sigma^*) \Psi^\top$, and given the fact that $\sigma^*=\diag(\Phi^\top X^* \Psi)$, this would be the best approximation of $X^*$ among matrices with (the columns of) $\Phi$ and $\Psi$ as their left and right singular vectors. This is perhaps reasonable, given the fact that the sensing matrices $A_i$ also share the same singular vectors, and it is natural to expect an approximation that is limited by their properties. In particular, when the $A_i$ are symmetric and hence commuting, under mild linear independence assumptions, $\Phi \Diag(\sigma^*) \Psi^\top$ would be the best approximation of $X^*$ in the algebra generated by the $A_i$-s, which is again a natural class given the nature of the measurements.

We are now ready to rigorously demonstrate the phenomenon of implicit regularization in our setting. To this end, following the gradient flow dynamics, we are interested in the behavior of $X_\infty$ in the limit when time goes to infinity.

\begin{thm} \label{thm:implicit-regularization}
    Let $X_\infty=\lim_{t \rightarrow \infty} X(t)$. Under Assumptions~\ref{assump:1} and \ref{assump:2}, the following hold:
    \begin{enumerate}[label=(\alph*)]
        \item $\ell(X_\infty) = 0$, and
        \item $X_\infty$ solves the optimization problem 
        \begin{equation}\label{eq:nuclear-norm-opt}
            \min_{X \in \RR^{d_1 \times d_2}} \lVert X \rVert_* \quad \textrm{subject to} \quad y_i = \langle A_i, X \rangle \,\,\forall i \in [m].
        \end{equation}
    \end{enumerate}
\end{thm}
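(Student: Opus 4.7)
The plan is to first identify the limiting matrix $X_\infty$ explicitly, and then to verify the two claims by a direct inner-product computation for (a) and a nuclear-norm/operator-norm duality argument for (b). Theorem~\ref{thm:gradent-flow} and the discussion following it show that the left and right singular vectors of $X(t)$ remain frozen at $\Phi$ and $\Psi$ throughout the gradient flow, so that $X(t)=\Phi\,\Diag(H(t)\alpha(t))\,\Psi^\top$, with $\Diag(\cdot)$ denoting the rectangular diagonal matrix. Combined with the coordinate-wise convergence $H(t)\alpha(t)\to\sigma^\star$ proved in Theorem~\ref{prop:convergence-rate}, this yields $X_\infty = \Phi\,\Diag(\sigma^\star)\,\Psi^\top$. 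Part~(a) is then immediate: expanding $\langle A_i, X_\infty\rangle$ via $A_i=\Phi\Abar_i\Psi^\top$ and the cyclic property of the trace collapses it to $\langle \sigma^{(i)},\sigma^\star\rangle$, which equals $y_i$ by Eq.~\ref{eq:y_i}, so every residual in $\ell(X_\infty)$ vanishes.

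For part~(b), I would pass to the rotated coordinates $\tilde X := \Phi^\top X\Psi$, which preserves the nuclear norm by orthogonal invariance. Since each $\Abar_i$ is diagonal, the feasibility constraint $y_i=\langle A_i,X\rangle$ becomes a condition involving only the diagonal entries of $\tilde X$, namely $\sum_j \sigma^{(i)}_j \tilde X_{jj} = y_i$. Taking the linear combination of these $m$ equalities with weights $a_1,\ldots,a_m$ supplied by Eq.~\ref{eq:assump-1b} and using $\sum_i a_i\sigma^{(i)}=\openone$ collapses the left-hand side to $\sum_j \tilde X_{jj}$; applying the same identity to $X_\infty$ (which is feasible by part~(a)) evaluates the right-hand side as $\sum_j \sigma^\star_j$. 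Hence every feasible $X$ satisfies the \emph{same} diagonal-sum identity $\sum_j \tilde X_{jj}=\sum_j \sigma^\star_j$ in the rotated frame. The final ingredient is the duality bound $\lVert\tilde X\rVert_* \ge \langle \tilde X, I_{d_1\times d_2}\rangle = \sum_j \tilde X_{jj}$, which holds because the rectangular identity has operator norm one and the nuclear norm is dual to the operator norm. Combining this with the previous step gives $\lVert X\rVert_* \ge \sum_j \sigma^\star_j$, while the explicit SVD of $X_\infty$ saturates the inequality.

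The main obstacle I anticipate is not analytical but rather one of careful bookkeeping: verifying the rectangular analogues of the identities invoked (rectangular diagonal matrix, rectangular identity, and cyclic trace with non-square factors), and checking that the reduction to coordinate-wise positive $\sigma^\star$ via the shift $X^\star\mapsto X^\star+cI_{d_1\times d_2}$ mentioned in Assumption~\ref{assump:1} neither corrupts the linear constraints nor alters the implicit-regularization conclusion, since positivity of $\sigma^\star$ is precisely what makes $\lVert X_\infty\rVert_*=\sum_j \sigma^\star_j$ (rather than $\sum_j|\sigma^\star_j|$) and thereby closes the chain of inequalities. Once the identification $X_\infty=\Phi\,\Diag(\sigma^\star)\,\Psi^\top$ is in hand, no further estimation beyond Theorem~\ref{prop:convergence-rate} is needed.
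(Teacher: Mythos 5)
Your proof is correct, and part (b) takes a genuinely different route from the paper. Part (a) is essentially identical: both you and the authors identify $X_\infty=\Phi\,\Diag(\sigma^\star)\,\Psi^\top$ from Theorems~\ref{thm:gradent-flow} and \ref{prop:convergence-rate} and verify feasibility by direct computation using \ref{eq:y_i}. For part (b), the paper argues via the KKT conditions of \ref{eq:nuclear-norm-opt}: it computes $\nabla\lVert X_\infty\rVert_*=\Phi\Psi^\top$, substitutes $A_i=\Phi\Abar_i\Psi^\top$, reduces the stationarity condition to $B\nu=\openone$, and invokes Assumption~\ref{assump:1} to guarantee a solution $\nu$. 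You instead give a primal duality-gap argument: you observe that the constraints, when combined with weights $a_1,\ldots,a_m$ from Eq.~\ref{eq:assump-1b}, force $\sum_j(\Phi^\top X\Psi)_{jj}=\sum_j\sigma^\star_j$ for every feasible $X$, that the nuclear norm dominates this diagonal sum via duality with the rectangular identity (operator norm one), and that $X_\infty$ saturates both inequalities because $\sigma^\star>0$ coordinatewise. The two arguments are dual aspects of the same fact --- the KKT multiplier $\nu$ in the paper and your coefficients $(a_i)$ serve as the same certificate of optimality, namely a vector whose image under $B$ is $\openone$ --- but your approach is more elementary and self-contained: it avoids the subdifferential formalism for nonsmooth convex optimization and directly exhibits a lower bound $\lVert X\rVert_*\ge\sum_j\sigma^\star_j$ valid for all feasible $X$, rather than only verifying first-order optimality at $X_\infty$. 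You also correctly flag the only subtle point: coordinate-wise positivity of $\sigma^\star$ is what makes $\lVert X_\infty\rVert_*=\sum_j\sigma^\star_j$ (and, in the paper's version, what makes the nuclear norm differentiable at $X_\infty$), and both proofs rely on it in the same way.
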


\begin{proof} (Main ideas -- full details in Appendix~\ref{app:proof-thm-implicit-regularization}).
A direct corollary of Theorem~\ref{prop:convergence-rate} is that
\begin{align*}
    H(\infty)\alpha(\infty) = \sigma^{\star}.
\end{align*}
By some algebraic manipulations, we can show that the limit of $X$ takes the form
\begin{align*}
    X_{\infty} =  \Phi \Big[\Diag(
    \sigma^\star)\Big] \Psi^\top.
\end{align*}
Now, let us look at the prediction given by $X_\infty$. For any $i = 1,\dots,m$, we have
\begin{align*}
    \langle A_i, X_\infty \rangle &= \langle \Phi \Abar_i \Psi^\top, \Phi \Diag(\sigma^\star) \Psi^\top \rangle = \langle \sigma^{(i)}, \sigma^\star\rangle = y_i,
\end{align*}
where the last equality holds due to \ref{eq:y_i}. This implies that $\ell(X_\infty) = 0$, proving (a). 

To prove (b), we will show that $X_\infty$ satisfies the Karush-Kuhn-Tucker (KKT) conditions of the optimization problem stated in Eq. \ref{eq:nuclear-norm-opt}. The conditions are
\begin{align*}
    &\exists \nu \in \RR^{m} \textrm{ s.t. } \quad \forall i \in [m], \langle A_i, X \rangle = y_i \quad \textrm{and} \\
    & \nabla_X \lVert X \rVert_* + \sum_{i=1}^{m} \nu_i A_i = 0.
\end{align*}
The solution matrix $X_\infty$ satisfies the first condition as proved in part (a). As for the second condition, note first that the gradient of the nuclear norm of $X_\infty$ is given by
\begin{equation*}
    \nabla \lVert X_\infty \rVert = \Phi \Psi^\top.
\end{equation*}
Therefore the second condition becomes
\begin{align*}
    \Phi \Psi^\top + \sum_{i=1}^{m} \nu_i A_i = 0
    \\ \Leftrightarrow\quad \Phi \Big( I - \sum_{i=1}^{m} \nu_i \Abar_i \Big) \Psi^\top = 0
    \quad\Leftrightarrow\quad B \nu = \openone.
\end{align*}
However, by Assumption~\ref{assump:1}, the vector $\openone$ lies in the column space of $B$, which implies the existence of such a vector $\nu$. This concludes the proof of part (b).
\end{proof}

\section{Numerical studies}
\label{sec:numerical-studies}

In this section, we present numerical studies to complement our theoretical analysis. 

We highlight that gradient flow can be viewed as gradient descent with an infinitesimal learning rate. Therefore, the gradient flow model serves as a reliable proxy for studying gradient descent when the learning rate is sufficiently small. Throughout our experiments, we shall consider gradient descent with varying learning rates, and demonstrate that the behavior suggested by our theory is best achieved using small learning rates.


The synthetic data used in our experiment is generated  by \(X^{\star} = \tilde{X}\tilde{X}^{T} \in\mathbb{R}^{10\times10} \) where each entry of \(\tilde{X}\in\mathbb{R}^{10\times 6} \) is randomly sampled from uniformly on \([0,1]\) and normalize so that \( \| X^{\star} \| =1 \). 

\subsection{Numerical verification of theoretical results}
In the first experiment, we follow Assumption \ref{assump:1} and Assumption \ref{assump:2} to generate measurement matrices and initialize our network. For every $k=1,K=2$, we initialize $\Ubar(0)$ and $\Vbar(0)$ to be diagonal matrices, whose diagonal entries are sampled uniformly from $[0,1]$, sorted in descending order, and set \(U(0)=\Phi\Ubar(0)G,V(0)=\Psi^{\top}\Vbar(0)G \) where \(G\) is a random orthogonal matrix. The rest of the parameters are sampled uniformly from $[0,1]$, and we use the hyperbolic tangent $\gamma(x) = (e^{x}-e^{-x})/(e^{x}+e^{-x})$ as activation.

For every measurement matrix $A_i$ where $i = 1, \dots, m$, with $m = 50$, we sample each entry of the diagonal matrix $\Abar_i$ from the uniform distribution on $[0, 1]$, sort them in decreasing order, and set $A_i = \Phi \Abar_i \Psi^\top$. The  measurements are recorded as $y_i = \langle A_i, X^\star \rangle +10^{-2}\epsilon_i $, where \(\epsilon_i\) is sampled from the standard Gaussian distribution, $i = 1, \ldots, m$. 

We illustrate the numerical results in Figure~\ref{fig:exp-1}. One can observe that singular values of the solution matrices converge to $\sigma^\star$ at an exponential rate as suggested by Theorem~\ref{prop:convergence-rate}, while the nuclear norm values also converge to \(\| X^{\star} \| = 1\).
We re-emphasize that our theoretical results are for gradient flow, which only acts as a good surrogate to study gradient descent when the learning rates are infinitesimally small. As a result, our theory cannot characterize the behavior of gradient descent algorithm with substantially large learning rates.
We show the evolution of the nuclear norm over time. Interestingly, but perhaps not surprisingly, the choice of the learning rate dictates the speed of convergence. Moderate values of the learning rate seem to yield the quickest convergence to stationarity.

\begin{figure}[ht]
\centering
\includegraphics[width=0.8\textwidth]{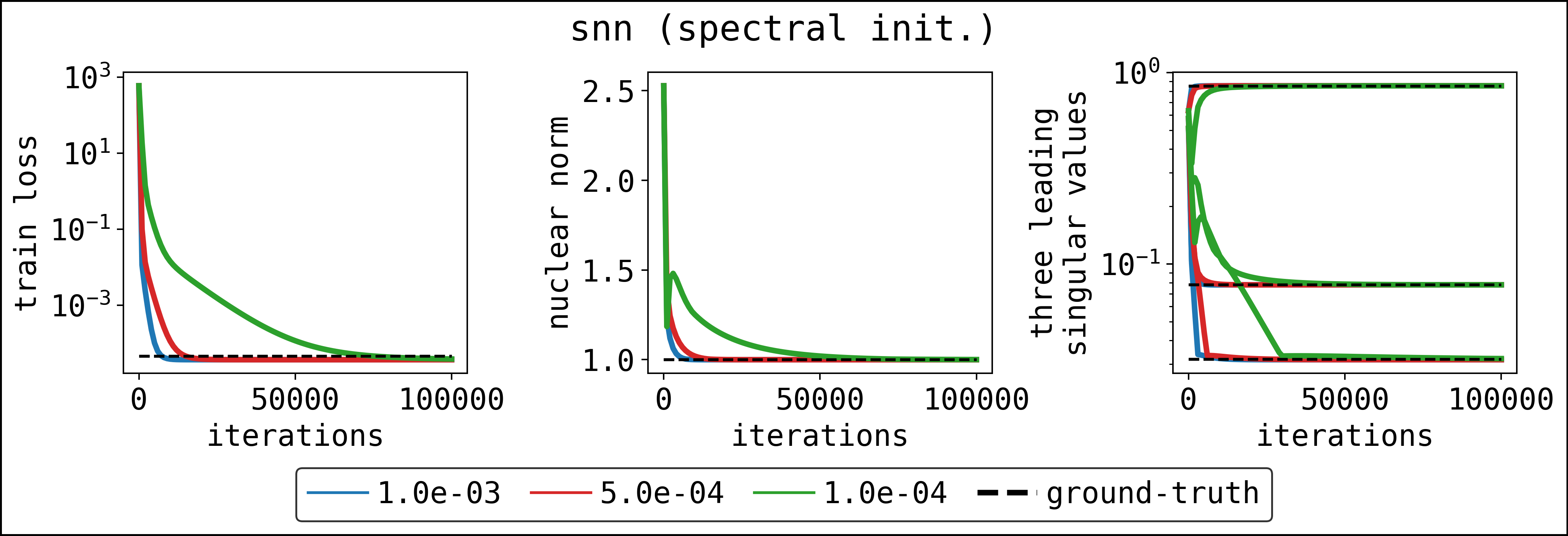}
\caption{The evolution of training error (left panel), nuclear norm (middle panel), and 3 leading singular values (right panel) over time with learning rates (\texttt{lr}) of different magnitudes.}
\label{fig:exp-1}
\end{figure}

\subsection{Experiments violating Assumptions~\ref{assump:1} and \ref{assump:2}} \label{sec:gen_setting} In the second experiment where Assumptions \ref{assump:1} and \ref{assump:2} do not hold, we show that our model is still able to recover the original images with reasonable quality, which indicates its effectiveness in solving matrix sensing problems. Each entry in the measurement matrix $A_i\in\mathbb{R}^{n\times n}$ ($i = 1, \dots, m = 50)$ now is generated by sampling independently from the standard Gaussian distribution, and \(U(0),V(0)\) are initialized as diagonal matrices, whose diagonal entries are \(10^{-4}+10^{-6}\epsilon\) where \(\epsilon\) is sampled from the standard Gaussian distribution. We report and compare the numerical results with the naive linear regression model \(X = \arg\min_{X} \frac{1}{2} \sum_{i=1}^{m} \left(y_i - \langle A_i,X\rangle  \right)^{2} \) and depth-3 factorization model \newline \(X = \arg\min_{W_1,W_2,W_3} \frac{1}{2} \sum_{i=1}^{m} \left(y_i - \langle A_i,W_1W_2W_3\rangle  \right)^{2} \) in Figure~\ref{fig:exp-2} and Figure~\ref{fig:exp-2-X}. In particular, the linear regression and depth-3 factorization models are initialized at \(X(0)=0,W_1(0)=W_2(0)=W_3(0)=10^{-4}I \).

We observe that the linear regression model yields solution matrices with low training errors but their nuclear norm values (\(\approx 1.5\)) and leading singular values are notably different from the ground-truth. This overfitting phenomenon could be partially explained by the setting that the number of observations \(m=50 \) is smaller than the number of variables \(n^2=100\). In contrast, our network \texttt{snn} produces lower nuclear norm values (\(\approx 0.95\)) and leading singular values are closer to the ground-truth. This numerically verifies our Theorem~\ref{thm:implicit-regularization}, affirming that our network has a preference for solutions with a lower nuclear norm among all feasible solutions satisfying \(m\) observations constraints. Corresponding to lower nuclear norm values, one also notes that our constructed matrices have better reconstruction quality (\(\texttt{psnr}\approx 20 \)) than the linear regression model (\(\texttt{psnr}\approx 7 \)). This highlights that the regularization phenomenon potentially yields favorable recovery solutions in practice. In addition, we find that the depth-3 factorization model returns solutions with lower effective ranks. However, these outcomes might not be favorable in solving matrix sensing problems as one can see the reconstruction errors are more significant than our model.

\begin{figure}[ht]
\centering
\includegraphics[width=0.8\textwidth]{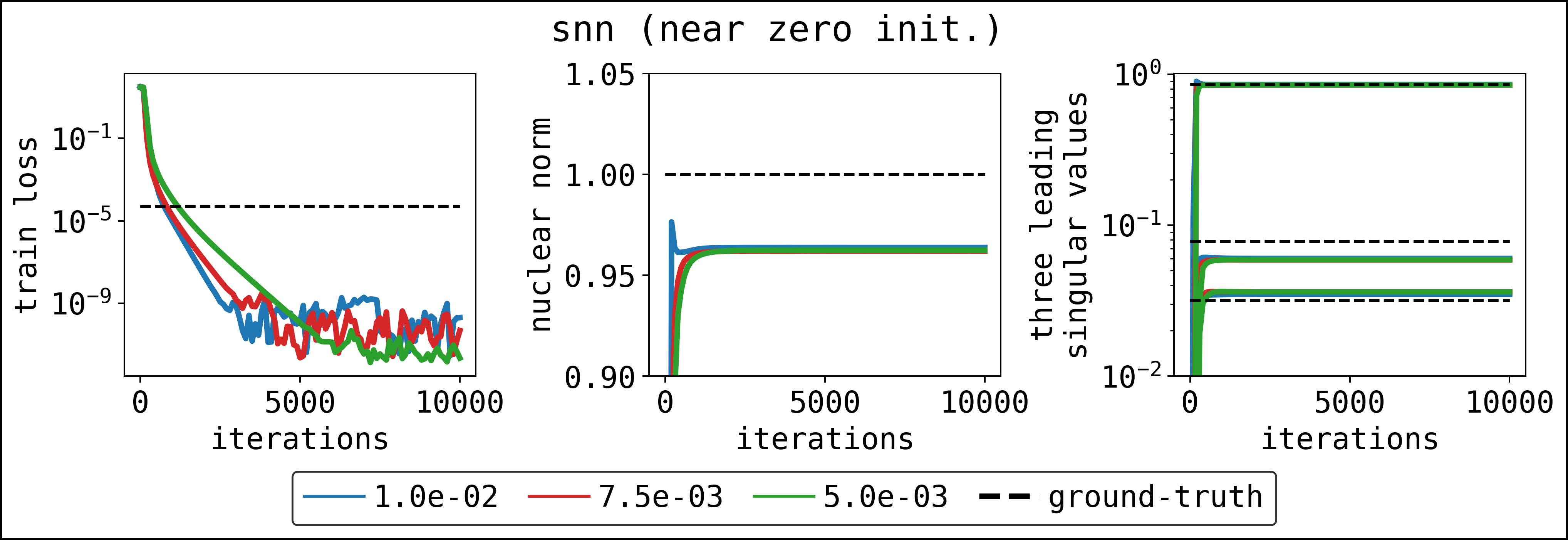}
\includegraphics[width=0.8\textwidth]{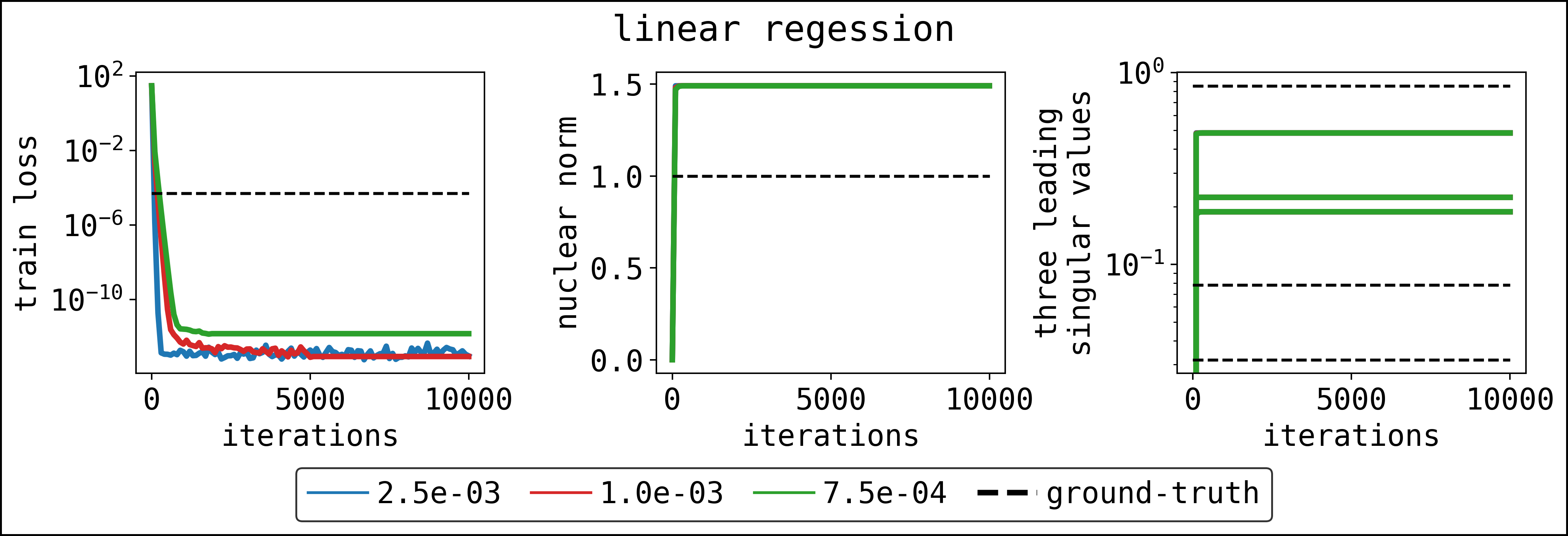}
\includegraphics[width=0.8\textwidth]{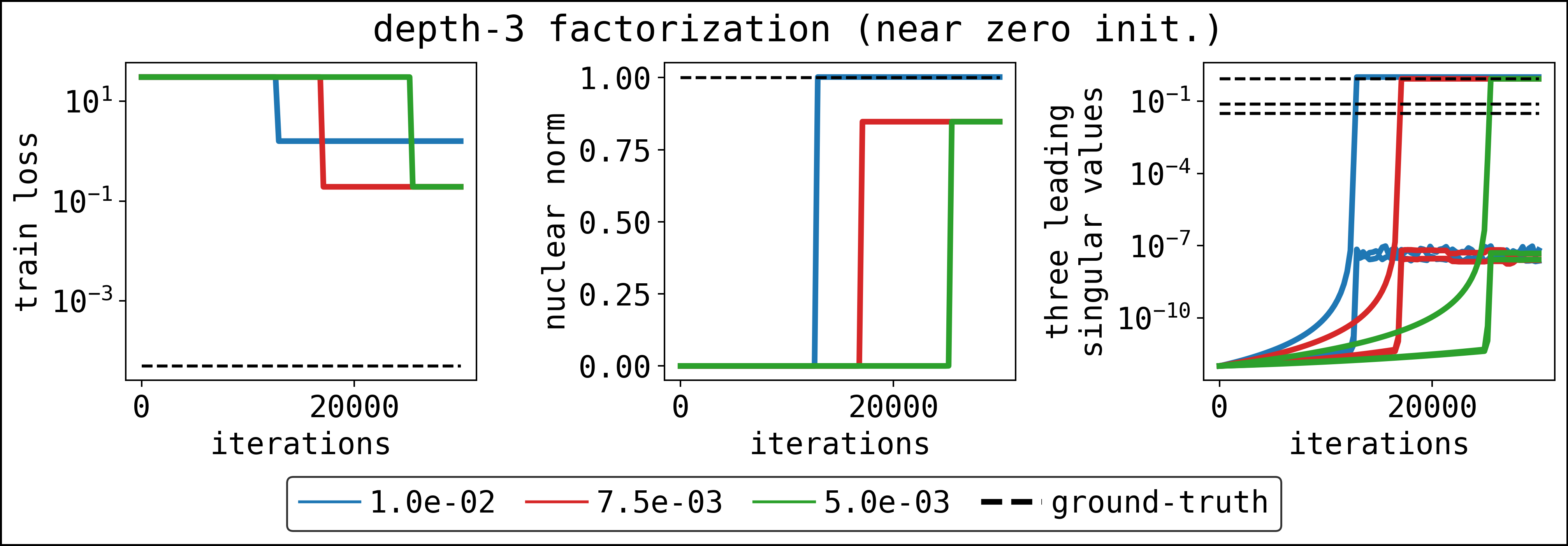}
\caption{The evolution of training error (left panel), nuclear norm (middle panel), 3 leading singular values (right panel) over time with different learning rates (\texttt{lr}) of different models. }
\label{fig:exp-2}
\end{figure}

\begin{figure}[ht]
\centering
\includegraphics[width=0.8\textwidth]{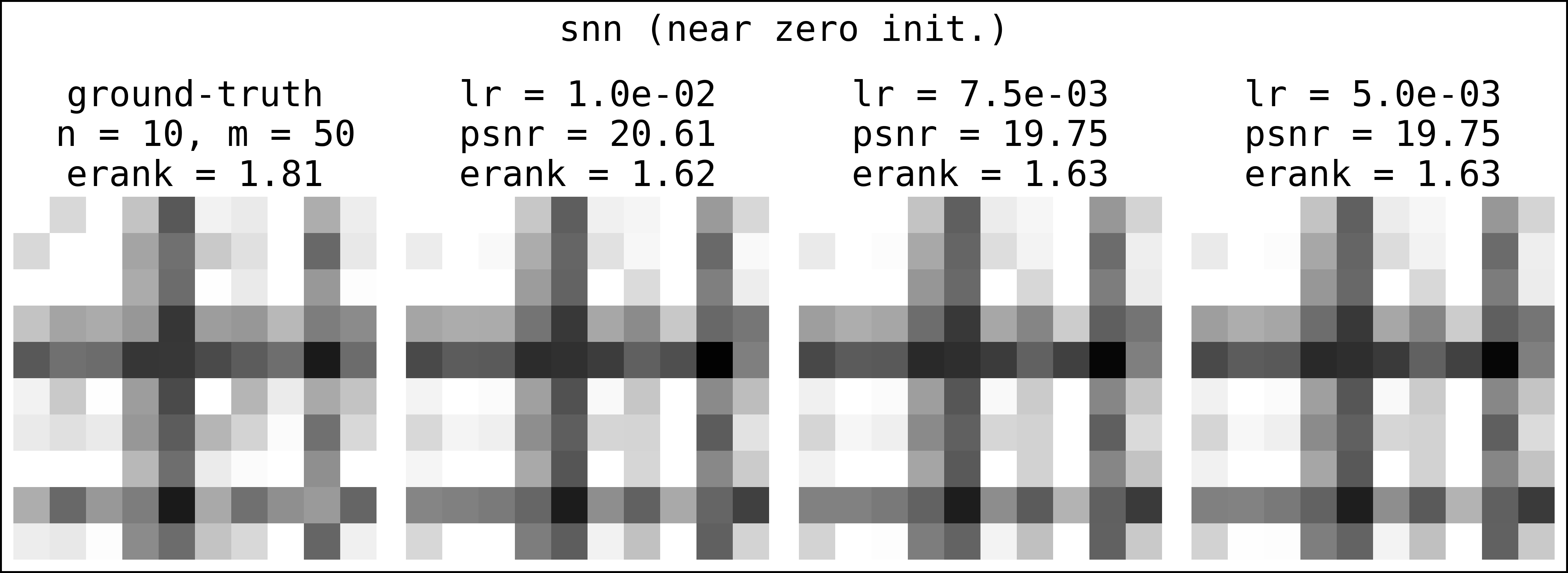}
\includegraphics[width=0.8\textwidth]{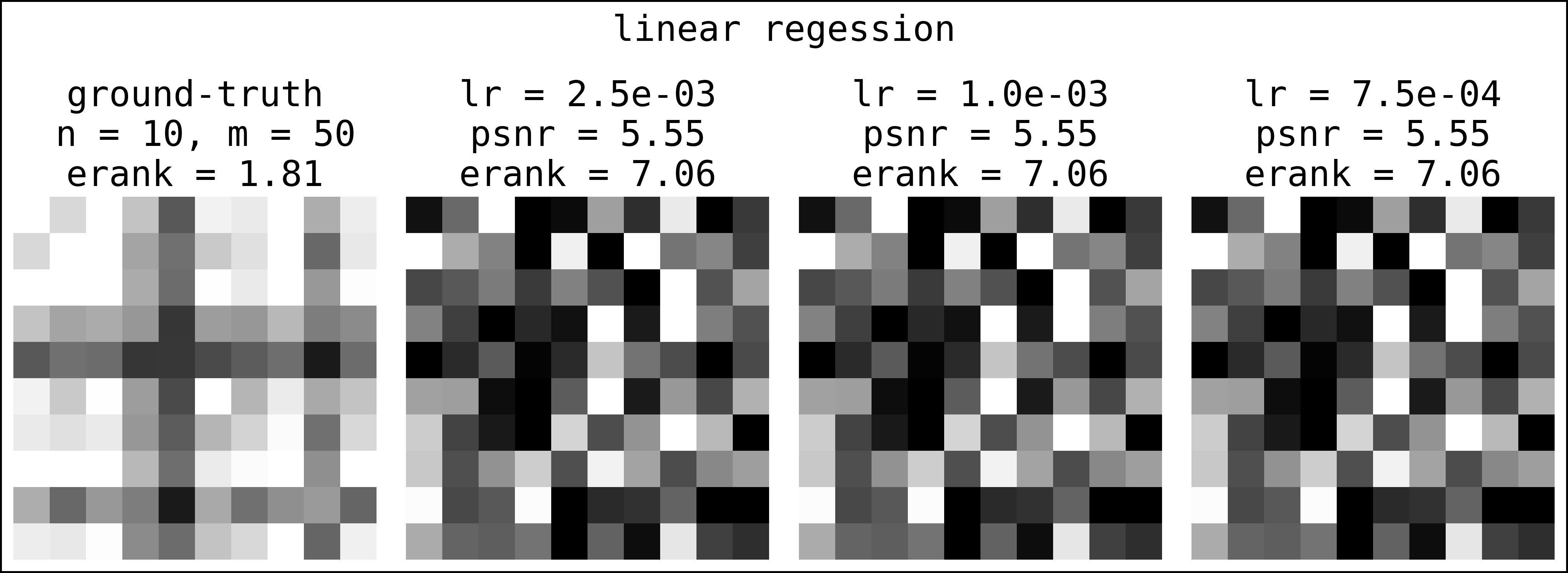}
\includegraphics[width=0.8\textwidth]{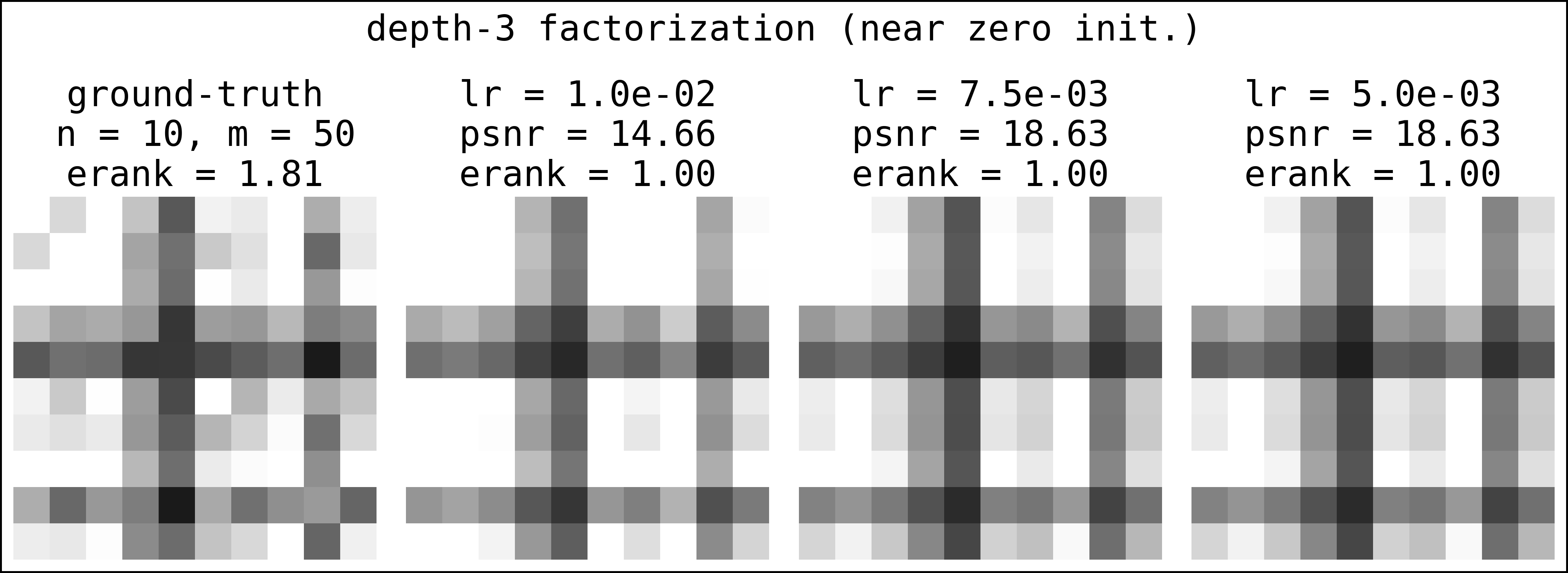}
\caption{The reconstruction images over time with different learning rates (\texttt{lr}) of different models. }
\label{fig:exp-2-X}
\end{figure}

\subsection{Real image reconstruction task} In the third experiment, we illustrate the numerical results where the ground-truth \(X^{\star}\) are real images retrieved from the \texttt{MNIST} dataset. The measurement matrices and initializations are generated as in Section~\ref{sec:gen_setting}, and \(K=4\). The numerical performances reported in  Figure~\ref{fig:exp-3} suggest that the spectral regularization phenomena could be regarded as a denoising methodology in certain circumstances where the number of observations is low.

\begin{figure}[ht]
\centering
\includegraphics[width=0.8\textwidth]{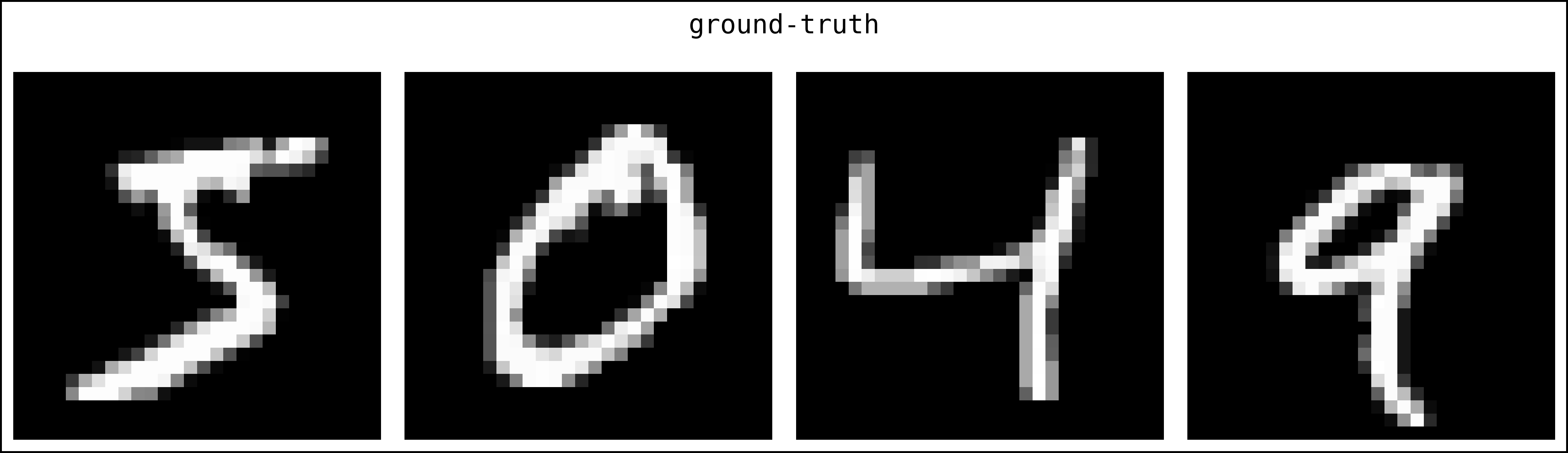}
\includegraphics[width=0.8\textwidth]{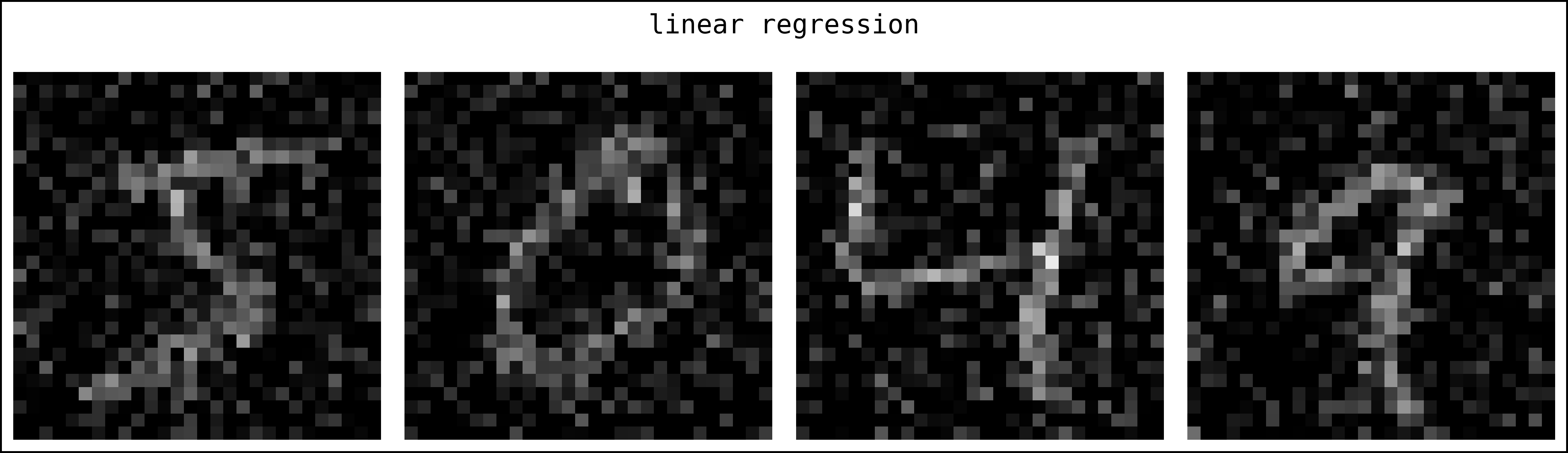}
\includegraphics[width=0.8\textwidth]{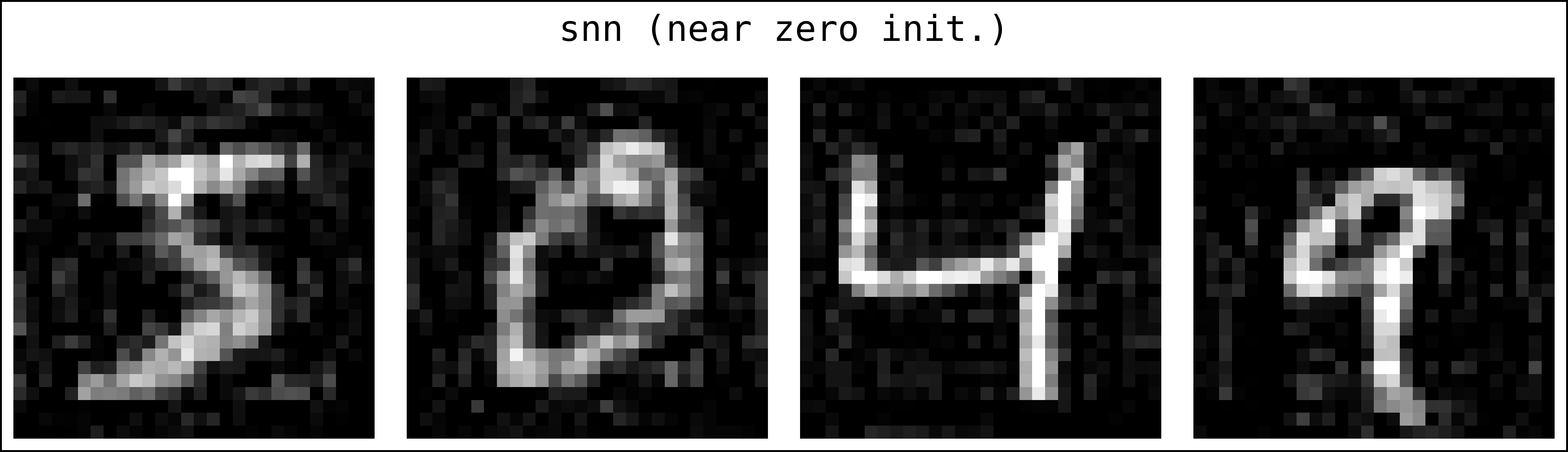}
\caption{Top row: ground-truth \(X^{\star}\), middle row \& bottom row: recovering images by linear regression and our model.}
\label{fig:exp-3}
\end{figure}

\section{Summary of contributions and future directions}
In this work, we investigate the phenomenon of implicit regularization via gradient flow in neural networks, using the problem of matrix sensing as a canonical test bed. We undertake our investigations in the more realistic scenario of non-linear activation functions, compared to the mostly linear structure that has been explored in the literature. In this endeavor, we contribute a novel neural network architecture called Spectral Neural Network (SNN) that is particularly well-suited for matrix learning problems. SNNs are characterized by a spectral application of a non-linear activation function to matrix-valued input, rather than an entrywise one. Conceptually, this entails coordinatizing the space of matrices by their singular values and singular vectors, as opposed to by their entries. We believe that this perspective has the potential to gain increasing salience in a wide array of matrix learning scenarios. SNNs are particularly well-suited for theoretical analysis due to their spectral nature of the non-linearities, as opposed to vanilla neural nets, while at the same time provably guaranteeing effectiveness in matrix learning problems. 
We also introduce a much more general counterpart of the near-zero initialization that is popular in related literature, and our methods are able to handle a much more robust class of initializing setups that are constrained only via certain inequalities. Our theoretical contributions include a compact analytical representation of the gradient flow dynamics, accorded by the spectral nature of our network architecture. We demonstrate a rigorous proof of exponentially fast convergence of gradient descent to an approximation to the original matrix that is best in a certain class, complemented by a matching lower bound, 
Finally, we demonstrate the matrix-valued limit of the gradient flow dynamics achieves zero training loss and is a minimizer of the matrix nuclear norm, thereby rigorously establishing the phenomenon of implicit regularization in this setting.

Our work raises several exciting possibilities for follow-up and future research. A natural direction is to extend our analysis to extend our detailed analysis to the most general setting when the sensing matrices $A_i$ are non-commuting. 
An investigation of the dynamics in the setting of discrete time gradient descent (as opposed to continuous time gradient flow) is an important question, wherein the optimal choice of the learning rate appears to be an intriguing question, especially in the context of our numerical studies (c.f. Fig. \ref{fig:exp-2}). 
Finally, it would be of great interest to develop a general theory of SNNs for applications of neural network-based techniques to matrix learning problems.

\section*{Acknowledgement}
S.G. was supported in part by the MOE grants R-146-000-250-133, R-146-000-312-114, A-8002014-00-00 and MOE-T2EP20121-0013.
\bibliography{reference}

\appendix

\section{Proof of Theorem \ref{thm:gradent-flow}}
\label{app:proof-thm-gradient-flow}

Before presenting the Proof of Theorem \ref{thm:gradent-flow}, we define the few notations. Let $A \in \RR^{d_1 \times d_2}, d_1 < d_2$ be a rectangular matrix and $a \in \RR^{d_1}$ be a vector. We let $A_{ij}$ denote the $(i,j)$-entry of $A$, $A_{i\ast}$ denote the $i$-th row of $A$, and $A_{\ast j}$ denote the $j$-th column of $A$. We define the following functions on the matrix $A$.
\begin{align}
    &\vectorize: \RR^{d_1 \times d_2} \rightarrow \RR^{d_1 d_2} \quad &\vectorize(A) = \begin{bmatrix} A_{\ast 1}^\top & \dots & A_{\ast d_2}^\top \end{bmatrix}^\top \\
    &\diag: \RR^{d_1 \times d_2} \rightarrow \RR^{d_1} &\diag(A) = [A_{11} \dots A_{d_1 d_1}]^\top \\
    &\Diag: \RR^{d_1} \rightarrow \RR^{d_1 \times d_2} \footnote{The dimension of $Diag(a)$ is automatically inferred depending on context.} &\Diag(a) = \begin{bmatrix}
        a_1 & \cdots & 0 & 0 & \cdots & 0 \\
        \vdots & \ddots & \vdots & \vdots & \ddots & \vdots \\
        0 & \cdots & a_{d_1} & 0 & \cdots & 0 \\
    \end{bmatrix}.
\end{align}

We are now ready to present the Proof of Theorem \ref{thm:gradent-flow}. We first recall the definition of $X$ in Eq. \ref{eq:X}
\begin{align*}
     X = \sum_{k=1}^{K} \alpha_k \Gamma(U_k V_k^\top),
\end{align*}
where $\Gamma(\cdot)$ is a matrix-valued function that applies a non-linear scalar-valued function $\gamma(\cdot)$ on the matrix's singular values. Under Assumption \ref{assump:1}, we can write $U_k$ and $V_k$ as
\begin{align*}
    U_k V_k^\top &=
    (\Phi \Ubar_k \Psi^T G) (G^\top \Vbar_k^\top \Psi^\top) = \Phi \Ubar_k \Vbar_k^\top \Psi^\top.
\end{align*}
Since both $\Ubar_k$ and $\Vbar_k$ are diagonal matrices, their product $\Ubar_k \Vbar_k^\top$ is also diagonal. Consequently, we can write
\begin{align*}
    \Gamma(U_k V_k^\top) &= \Phi \gamma(\Ubar_k \Vbar_k^\top) \Psi^\top,
\end{align*}
where $\gamma(\cdot)$ is applied entry-wise on the matrix $\Ubar_k \Vbar_k^\top$. We can now write the matrix $X$ as
\begin{align} \label{eq:X-Z_k}
    X = \sum_{k=1}^{K} \alpha_k \Phi \gamma(\Ubar_k \Vbar_k^\top) \Psi^\top = \Phi \bigg( \sum_{k=1}^{K} \alpha_k \gamma(\Ubar_k \Vbar_k^\top) \bigg) \Psi^\top.
\end{align}
For notational convenience, we define the following notations to be used throughout this section:
\begin{align}
    \Xbar &= \sum_{k=1}^{K} \alpha_k \gamma(\Ubar_k \Vbar_k^\top) \in \RR^{d_1 \times d_2}: \textrm{Diagonal matrix containing the singular values of $X$} \\
    \xvbar &= \diag(\Xbar) \in \RR^{d_1}: \textrm{Vector containing the singular values of $X$} \\
    \xv &= \vectorize(X) \in \RR^{d_1 d_2}: \textrm{The matrix $X$ expressed as a vector} \\
    \Theta &= \Psi \boxtimes \Phi \in \RR^{d_1 d_2 \times d_1}: \textrm{Khatri--Rao product between $\Psi$ and $\Phi$} \label{eq:khatri-rao} \\
    G &= \dfrac{\partial\ell(X)}{\partial X} = -\sum_{j=1}^{m} (y_j - \langle A_j, X \rangle) A_j \in \RR^{d_1 \times d_2}: \textrm{Gradient of $\ell(X)$ with respect to $X$} \label{eq:G} \\
    \gv &= \vectorize(G) \in \RR^{d_1 d_2}: \textrm{The gradient $G$ expressed as a vector} \\
    Z_k &= U_k V_k^\top \in \RR^{d_1 \times d_2}: \textrm{Product matrix between $U_k$ and $V_k^\top$}.
\end{align}
The Khatri--Rao product in Eq. \ref{eq:khatri-rao} is defined as follows: the columns of the matrix $\Theta$ are Kronecker products of the corresponding columns of $\Psi$ and $\Phi$. In other words, the $i$-th column of $\Theta$ can be expressed as the vectorization of the outer product between the $i$-th column of $\Phi$ and the $i$-th column of $\Psi$, i.e.,
\begin{align} \label{eq:Theta_*i}
    \Theta_{\ast i} = \vectorize( \phi_i \psi_i^\top ).
\end{align}
We shall see in the next paragraph that by leveraging the Khatri--Rao product, we can write the differentials of many quantities of interest compactly, facilitating the derivation of the gradient flow dynamics in Theorem \ref{eq:gradient-flow}.

Therefore, we can use the Khatri--Rao product to expand $\vectorize(X)$ as follows:
\begin{align}
    X &= \Phi \Xbar \Psi^\top = \sum_{i=1}^{d_1} \Xbar_{ii} \phi_i \psi_i^\top \\
    \xv &= \vectorize(X) = \sum_{i=1}^{d_1} \Xbar_{ii} \vectorize(\phi_i \psi_i^\top) 
    = \sum_{i=1}^{d_1} \Xbar_{ii} \Theta_{\ast i} = \Theta \xvbar.
\end{align}
From here, we can write the differential of $X$ as
\begin{equation}\label{eq:diff_khatri-rao}
    d\xv = \Theta d\xvbar.
\end{equation}
Since $\xvbar_i$ is the $i$-th singular value of the matrix $X$, we can express the differential of $\xvbar_i$ as follows:
\begin{align}
    d\xvbar_i = \langle \phi_i \psi_i^\top, dX \rangle 
    = \langle \phi_i \psi_i^\top, \alpha_k \gamma^\prime(z_i^{(k)}) dZ_k \rangle
    = \langle \alpha_k \gamma^\prime(z_i^{(k)}) \phi_i \psi_i^\top, dZ_k \rangle,
\end{align}
where the first equality is due to Eq. \ref{eq:X}, and the second equality is due to $\alpha_k$ and $\gamma^\prime(z_i^{(k)})$ being scalars. Notice that we can write the vector $\xvbar$ as a sum over its entries as follow
\begin{align*}
    \xvbar &= \sum_{i=1}^{d_1} \xvbar_i \cdot \ev_i,
\end{align*}
where $\ev_i$ denote the $i$-th canonical basis vectors of $\RR^{d_1}$. We have
\begin{align} \label{eq:dxvbar}
    d\xvbar &= \sum_{i=1}^{d_1} d\xvbar_i \cdot \ev_i = \sum_{i=1}^{d_1} \langle \alpha_k \gamma^\prime(z_i^{(k)}) \phi_i \psi_i^\top, dZ_k \rangle \cdot \ev_i = \alpha_k \gamma^\prime(z_i^{(k)}) \Big\langle \sum_{i=1}^{d_1} \ev_i \star \phi_i \psi_i^\top, dZ_k \Big\rangle, 
\end{align}
where $\star$ denotes the tensor product, i.e., $\ev_i \star \phi_i \psi_i^\top \in \RR^{d_1 \times d_1 \times d_2}$ is a third-order tensor. Since $dZ_k$ has dimension $d_1 \times d_2$, the above Frobenius product returns a vector of dimension $d_1$, which matches that of $d\xvbar$. Substituting Eq. \ref{eq:dxvbar} this into the differential of $\ell(X)$ gives
\begin{align}
     d\ell(X) &= \Big\langle \dfrac{\partial\ell}{\partial X}, dX \Big\rangle = \langle G, dX \rangle = \gv^\top d\xv = \gv^\top \Theta d\xvbar \nonumber \\
     &= \alpha_k \gamma^\prime(z_i^{(k)}) \Big\langle \sum_{i=1}^{d_1} (\gv^\top \Theta \ev_i)(\phi_i \psi_i^\top), dZ_k \Big\rangle.
\end{align}
Let us define the scalar $\lambda_i^{(k)}$ as
\begin{align*}
    \lambda_i^{(k)} &= -\alpha_k \gamma^\prime(z_i^{(k)}) \: \gv^\top \Theta \ev_i \\
    &\stackrel{(a)}{=} -\alpha_k \gamma^\prime(z_i^{(k)}) \: \gv^\top \Theta_{\ast i} \\
    &\stackrel{(b)}{=} -\alpha_k \gamma^\prime(z_i^{(k)}) \: \gv^\top \vectorize(\phi_i \psi_i^\top) \\
    &= -\alpha_k \gamma^\prime(z_i^{(k)}) \langle G, \phi_i \psi_i^\top \rangle \\
    &\stackrel{(c)}{=} \alpha_k \gamma^\prime(z_i^{(k)}) \Big\langle \sum_{j=1}^{m} (y_j - \langle A_j, X \rangle) A_j, \phi_i \psi_i^\top \Big\rangle \\
    &= \alpha_k \gamma^\prime(z_i^{(k)}) \sum_{j=1}^{m} (y_j - \langle A_j, X \rangle) \Big\langle A_j, \phi_i \psi_i^\top \Big\rangle \\
    &\stackrel{(d)}{=} \alpha_k \gamma^\prime(z_i^{(k)}) \sum_{j=1}^m \Big( \langle \sigma^{(j)},\sigma^\star \rangle - \sum_{l=1}^{K} \alpha_l \langle \sigma^{(j)}, \gamma(z^{(l)}) \rangle \Big) \cdot \sigma_i^{(j)} \\
    &\stackrel{(e)}{=} \alpha_k \gamma^\prime(z_i^{(k)}) \sum_{j=1}^m \Big(B^\top \sigma^\star - B^\top H \alpha \Big) \cdot \sigma_i^{(j)} \\
    &= \alpha_k \gamma^\prime(z_i^{(k)}) \cdot \row_i(B)B^\top(\sigma^\star - H\alpha),
\end{align*}
where $(a)$ is due to $\Theta \ev_i$ equals to the $i$-column of $\Theta$, $(b)$ is due to Eq. \ref{eq:Theta_*i}, $(c)$ is due to the definition of the matrix $G$ in Eq. \ref{eq:G}, $(d)$ is due to Eq. \ref{eq:y_i}, and $(e)$ is due to the definitions of $B$ and $H$.

Let $\lambda^{(k)} \in \RR^{d_1}$ denote the vector containing the $\lambda_i^{(k)}$, we can write
\begin{align}
    \lambda^{(k)} = \alpha_k \cdot \gamma^\prime(z^{(k)}) \circ B B^\top (\sigma^\star - H \alpha) = \alpha_k \cdot \gamma^\prime(z^{(k)}) \circ \Cc (\sigma^\star - H \alpha).
\end{align}
The differential of $d\ell(X)$ becomes
\begin{align}
    d\ell(X) &= -\Big\langle \sum_{i=1}^{d_1} \lambda_i^{(k)} \phi_i \psi_i^\top, dZ_k \Big\rangle, \nonumber \\
    \dfrac{\partial\ell(X)}{\partial Z_k} &= -\sum_{i=1}^{d_1} \lambda_i^{(k)} \phi_i \psi_i^\top = -\Phi L^{(k)} \Psi^\top,
\end{align}
where $L^{(k)}$ is a diagonal matrix whose diagonal entries are $\lambda_i^{(k)}$, i.e., $L^{(k)}=\Diag(\lambda^{(k)})$. Since $Z_k = U_k V_k^\top$, we have
\begin{align}
    \dfrac{\partial\ell(X)}{\partial U_k} &= -\sum_{i=1}^{d_1} \lambda_i^{(k)} \phi_i \psi_i^\top = -\Phi L^{(k)} \Psi^\top V_k, \\
    \dfrac{\partial\ell(X)}{\partial V_k} &= -\sum_{i=1}^{d_1} \lambda_i^{(k)} \phi_i \psi_i^\top = -(\Phi L^{(k)} \Psi^\top)^\top U_k.
\end{align}
This concludes the proof for the gradient flow dynamics on $U_k$ and $V_k$. In the remaining, we shall derive the gradient of $\ell(X)$ with respect to the scalar $\alpha_k$.
\begin{align*}
    \dfrac{\partial\ell(X)}{\partial\alpha_k} &= -\sum_{j=1}^{m} (y_j - \langle A_j, X \rangle) \langle \Gamma(U_k V_k^\top), A_j \rangle \\
    &= -\sum_{j=1}^{m} \Big( \big\langle \sigma^{(i)}, \sigma^\star - \sum_{l=1}^{K} \alpha_l \gamma(z^{(l)}) \big\rangle \cdot \big\langle \sigma^{(i)}, \gamma(z^{(k)}) \big\rangle \Big) \\
    &= -\gamma(z^{(k)})^\top \Cc \Big( \sigma^\star - \sum_{l=1}^{K} \alpha_l \gamma(z^{(l)}) \Big) \\
    &= -\gamma(z^{(k)})^\top \Cc (\sigma^\star - H \alpha),
\end{align*}
where the second equality is due to Eq. \ref{eq:y_i}. Consequently, the gradient of $\ell(X)$ with respect to the vector $\alpha$ is
\begin{align}
    \dfrac{\partial\ell(X)}{\partial\alpha} = -H^\top \Cc (\sigma^\star - H\alpha),
\end{align}
which concludes the proof of Theorem \ref{thm:gradent-flow}.

\section{Proof of Theorem \ref{prop:convergence-rate}}
\label{app:proof-prop-convergence-rate}

Let us direct our attention to the evolution of the diagonal elements. Restricting \ref{eq:dynamics-uvbar} to the diagonal elements gives us a system of differential equations for each $i \in [m]$:
\begin{align}
    [\Ubardot_k]_{ii}=\lambda_i^{(k)} [\Vbar_k]_{ii}, \quad
    [\Vbardot_k]_{ii}=\lambda_i^{(k)} [\Ubar_k]_{ii}.
\end{align}
We can re-write the above into a single matrix differential equation as
\begin{align}
    \dot{\begin{bmatrix} [\Ubar_k]_{ii} \\ [\Vbar_k]_{ii} \end{bmatrix}} \:\: = \lambda_i \begin{bmatrix} 0 & 1 \\ 1 & 0 \end{bmatrix} \begin{bmatrix} [\Ubar_k]_{ii} \\ [\Vbar_k]_{ii} \end{bmatrix}.
\end{align}
For the remaining of this section, we define the following notations for ease of presentation:
\begin{align}
    \xv_i^{(k)} &= \begin{bmatrix} [\Ubar_k]_{ii} \\ [\Vbar_k]_{ii} \end{bmatrix} \\
    A &= \begin{bmatrix} 0 & 1 \\ 1 & 0 \end{bmatrix} \\
    w_i^{(k)} &= \dfrac{1}{2} \xv_i^{(k)\top} \xv_i^{(k)} = \dfrac{1}{2} (\Ubar_{ii}^2 + \Vbar_{ii}^2) \label{eq:w} \\
    \dot{w}_i^{(k)} &= \xv_i^{(k)\top} \dot{\xv}_i^{(k)} \\
    z_i^{(k)} &= \dfrac{1}{2} \xv_i^{(k)\top} A \xv_i^{(k)} = [\Ubar_k]_{ii} [\Vbar_k]_{ii} \\
    \dot{z}_i^{(k)} &= \xv_i^{(k)\top} A \dot{\xv}_i^{(k)}.
\end{align}
The above matrix differential equation becomes
\begin{align}
    \dot{\xv}_i^{(k)} &= \lambda_i^{(k)} A \xv_i^{(k)} \nonumber \\
    \xv_i^{(k)\top} \dot{\xv}_i^{(k)} &= \xv_i^{(k)\top} \lambda_i^{(k)} A \xv_i^{(k)} \nonumber \\
    \dot{w}_i^{(k)} &= 2 \lambda_i^{(k)} z_i^{(k)} \label{eq:w-dot-z}
\end{align}
On another note, we also have
\begin{align}
    \dot{\xv}_i^{(k)} &= \lambda_i^{(k)} A \xv_i^{(k)} \nonumber \\
    \xv_i^{(k)\top} A \dot{\xv}_i^{(k)} &= \lambda_i^{(k)} \xv_i^{(k)\top} A A \xv_i^{(k)} \nonumber \\
    \dot{z}_i^{(k)} &= 2 \lambda_i^{(k)} w_i^{(k)}. \label{eq:z-dot-w}
\end{align}
We are now ready to prove the main result. In the remaining proof, we will derive the differential equation for $H \alpha$. By the product rule of calculus, we have
\begin{align}
    \dot{(H \alpha)} = (\dot{H}) \alpha + H (\alphadot).
\end{align}
We shall derive $\dot{H} \alpha$ and $H \alphadot$ separately. First, let us consider the evolution of $H$ over time.
\begin{align}
    \dot{H} &= \begin{bmatrix} \, \dots \, | \, \gamma^\prime(z^{(k)})\dot{z}^{(k)} \, | \, \dots \, \end{bmatrix} \\
    &= \begin{bmatrix} \, \dots \, | \, \gamma^\prime(z^{(k)}) \circ 2\lambda^{(k)} \circ w^{(k)} \, | \, \dots \, \end{bmatrix} \\
    &= \begin{bmatrix} \, \dots \, | \, \gamma^\prime(z^{(k)})^2 \circ 2\Cc (\sigma^\star - H \alpha) \circ w^{(k)} \, | \, \dots \, \end{bmatrix} \Diag(\alpha), \label{eq:hdot-rhs}
\end{align}
where the first equality follows from the definition of $H$ and the chain rule of calculus, the second equality is due to Eq. \ref{eq:z-dot-w}, and the last equality follows from Theorem~\ref{thm:gradent-flow}. Multiplying the vector $\alpha$ from the right on both sides gives:
\begin{align} \label{eq:Hdot-alpha}
    (\dot{H}) \alpha &= 2\sum_{k=1}^{K} \alpha_k^2 \cdot \big(\gamma^\prime(z^{(k)})^2 \circ \Cc (\sigma^\star - H \alpha) \circ w^{(k)}\big).
\end{align}
Recall that from Theorem~\ref{thm:gradent-flow}, we have
\begin{align} \label{eq:alphadot-rhs}
    \alphadot &= H^\top \Cc (\sigma^\star - H \alpha).
\end{align}
Multiplying the matrix $H$ from the left on both sides gives
\begin{align} \label{eq:H-alphadot}
    H (\alphadot) &= H H^\top \Cc (\sigma^\star - H \alpha).
\end{align}
Combining Eq. \ref{eq:Hdot-alpha} and Eq. \ref{eq:H-alphadot} gives
\begin{align} \label{eq:Halphadot}
    \dot{(H \alpha)} &= 4\sum_{k=1}^{K} \alpha_k^2 \cdot \big(\gamma^\prime(z^{(k)})^2 \circ \Cc (\sigma^\star - H \alpha) \circ w^{(k)}\big) + H H^\top \Cc (\sigma^\star - H \alpha).
\end{align}
Notice that by definition, the matrix $B$ contains the singular values of $A_i$'s, and therefore its entries are non-negative. Consequently, since $\Cc = BB^\top$, $\Cc$'s entries are also non-negative. 
Finally, by definition in Eq. \ref{eq:w}, $w^{(k)}$ has non-negative entries. Therefore, all quantities in Eq. \ref{eq:Halphadot} are non-negative entry-wise, except for the vectors $(\sigma^\star - H \alpha)$. Consequently, both quantities $(\dot{H}) \alpha$ and $H (\alphadot)$ have the same sign as $(\sigma^\star - H \alpha)$.
By our initialization, this sign is non-negative.

Furthermore, this non-negativity implies that
\begin{align} \label{eq:Halphadot-ineq-1}
    \dot{(H \alpha)} &\ge  H H^\top \Cc (\sigma^\star - H \alpha).
\end{align}

\begin{align} \label{eq:Halphadot-ineq-2}
    \dot{(H \alpha)} &\ge 4\sum_{k=1}^{K} \alpha_k^2 \cdot \big(\gamma^\prime(z^{(k)})^2 \circ \Cc (\sigma^\star - H \alpha) \circ w^{(k)}\big).
\end{align}
We will have the occasion to use both inequalities depending on the situation. 
Finally, we can also write down a similar differential equation for each $(H_{ij}$ from Eq. \ref{eq:hdot-rhs} as 
\begin{align} \label{eq:hdot-rhs-entrywise}
\partial_t H_{ij} = 2 \cdot \alpha_j \gamma^\prime(z^{(j)}_i)^2 [\Cc (\sigma^\star - H \alpha)]_{ij}  w^{(k)}_i.
\end{align}

By Assumption~\ref{assump:2}, part (c), at initialization, we have $H(0)\alpha(0) < \sigma^\star$ entry-wise. This implies that each entry in $\dot{(H\alpha)}$ is positive at initialization, and therefore $H \alpha$ is increasing in a neighborhood of $0$. As $H \alpha$ approaches $\sigma^\star$, the gradient $\dot{(H \alpha)}$ decreases and reaches $0$ exactly when $H \alpha = \sigma^\star$, which then causes $H \alpha$ to stay constant from then on. Thus, we have shown that
\begin{align} \label{eq:lim-Halpha}
    \lim_{t \rightarrow \infty} H(t) \alpha(t) = \sigma^\star \quad \textrm{and} \quad \dot{(H\alpha)} \geq 0.
\end{align}

Combining Eq. \ref{eq:z-dot-w} and Eq. \ref{eq:w-dot-z}, we have
\begin{align}
    \dot{w}_i^{(k)} \cdot w_i^{(k)} - \dot{z}_i^{(k)} \cdot z_i^{(k)} = 0.
\end{align}
Integrating both sides with respect to time, we have that for any $t > 0$
\begin{align} \label{eq:w-z-conserve}
    (w_i^{(k)}(t))^2 - (z_i^{(k)}(t))^2 = Q.
\end{align}
for some constant $Q$ which does not depend on time. Since $w^{(k)}_i$ is non-negative by definition, the above implies that $w^{(k)}_i(t) \geq \sqrt{|Q|}$ for all $t > 0$; note that $Q>0$ can be ensured via initialization, as discussed below. To this end, notice that
\begin{align*}
    (w^{(k)}_i(0))^2 - (z^{(k)}_i(0))^2 &= \frac{1}{4}(\Ubar_{k,ii}(0)^2 + \Vbar_{k, ii}(0)^2)^2 - \Ubar_{k,ii}(0)^2 \Vbar_{k, ii}(0)^2 \\
    &= \frac{1}{4}(\Ubar_{k,ii}(0)^2 - \Vbar_{k, ii}(0)^2)^2.
\end{align*}
Thus this can always be arranged simply by initializing $\Ubar_{k}(0), \Vbar_{k}(0)$ suitably. 

This, in particular, implies that $w^{(k)}_i$ is bounded away from 0 in time.

In the remainder of this section, we will show that the convergence rate is exponential. In the below, we shall establish a lower bound on $\dot{(H \alpha)}$.

\textbf{Case 1:} The $\alpha_k$ are upper bounded by a finite constant $\alphamax > 0$ for all $k \in [K]$ and $t \in \RR_+$.

Let $h_1,\dots,h_{d_1}$ denote the rows of $H$, and $b_1,\dots,b_{d_1}$ denote the rows of $B$. We have
\begin{align}
    H \dot \alpha &= H H^\top \Cc (\sigma^\star - H\alpha) = (H H^\top) (B B^\top) (\sigma^\star - H\alpha) \\
    &= \begin{bmatrix} 
        h_1^\top h_1 & h_1^\top h_2 & \dots & h_1^\top h_{d_1} \\
        \vdots & \vdots & \ddots & \vdots \\
        h_1^\top h_{d_1} & h_2^\top h_{d_1} & \dots & h_{d_1}^\top h_{d_1}
    \end{bmatrix} \begin{bmatrix} 
        b_1^\top b_1 & b_1^\top b_2 & \dots & b_1^\top b_{d_1} \\
        \vdots & \vdots & \ddots & \vdots \\
        b_1^\top b_{d_1} & b_2^\top b_{d_1} & \dots & b_{d_1}^\top b_{d_1}
    \end{bmatrix} (\sigma^\star - H \alpha) \\
    &\geq \begin{bmatrix} 
        h_1^\top h_1 & & & \\
        & h_2^\top h_2 & & \\
        & & \ddots & \\
        & & & h_{d_1}^\top h_{d_1} 
    \end{bmatrix} \begin{bmatrix} 
        b_1^\top b_1 & & & \\
        & b_2^\top b_2 & & \\
        & & \ddots & \\
        & & & b_{d_1}^\top b_{d_1} 
    \end{bmatrix} (\sigma^\star - H \alpha) \\
    &= \begin{bmatrix} 
        h_1^\top h_1 \cdot b_1^\top b_1 & & & \\
        & h_2^\top h_2 \cdot b_2^\top b_2 & & \\
        & & \ddots & \\
        & & & h_{d_1}^\top h_{d_1} \cdot b_{d_1}^\top b_{d_1} 
    \end{bmatrix} (\sigma^\star - H \alpha) \\
    &= \begin{bmatrix}
        h_1^\top h_1 \cdot b_1^\top b_1 & \dots & h_{d_1}^\top h_{d_1} \cdot b_{d_1}^\top b_{d_1}
    \end{bmatrix}^\top \circ (\sigma^\star - H \alpha),
\end{align}
where the first inequality is due to the non-negativity of the entries of $H$ and $B$. Let us focus our attention on the evolution of the $i$-th entry of $H \alpha$.
\begin{align} \label{eq:lowerbound-case1}
    \dot{(H \alpha)}_i &= [(\dot{H})\alpha]_i + [H(\alphadot)]_i \geq  [H(\alphadot)]_i \geq (h_i^\top h_i \cdot b_i^\top b_i) (\sigma^\star_i - (H \alpha)_i),
\end{align}
where the first inequality is due to $[H(t)\alpha(t)]_i \leq \sigma^\star_i$, which causes $(\dot{H})\alpha_i$ to be non-negative.

Notice that $b_i$ are constants with respect to time, and are non-zero because of the condition that the all-ones vector lies in the range of $B$. Therefore, to show a lower bound on $\dot{(H \alpha)}_i$, it remains to show that $h_i^\top h_i$ (or $\lVert h_i \rVert$) is bounded away from $0$.

In the previous part, we have shown that $H(t)\alpha(t)$ approaches $\sigma^\star$ as $t \rightarrow \infty$.  Therefore, for any $\epsilon > 0$, there exists a time $t_0$ after which $|\sigma^\star_i - (H \alpha)_i| \leq \epsilon$. Notice that $(H \alpha)_i = h_i^\top \alpha$. By Cauchy-Schwarz inequality, we have for $t > t_0$:
\begin{align}
    \lVert h_i(t) \rVert \cdot \lVert \alpha(t) \rVert \geq |h_i(t)^\top \alpha(t)| \geq |\sigma^\star_i| - |\sigma^\star_i - h_i(t)^\top \alpha(t)| \geq |\sigma^\star_i| - \epsilon,
\end{align}
where the second inequality is due to the Triangle inequality. Choose $\epsilon = |\sigma^\star_i| / 2$, we have
\begin{align}
    \lVert h_i(t) \rVert \geq \dfrac{|\sigma^\star_i|}{2\lVert \alpha(t) \rVert} \geq \dfrac{|\sigma^\star_i|}{2\sqrt{K} \alphamax}.
\end{align}

Let us define the constant $\eta_i=|\sigma^\star_i|/(2\sqrt{K} \alphamax) \cdot b_i^\top b_i$, and $\beta_i = \sigma^\star_i - (H \alpha)_i$. Notice that $\eta_i$ is a constant with respect to $t$. We have the following differential inequality:
\begin{align} \label{eq:diff_ineq}
    \dot{\beta}_i = -\dot{(H \alpha)}_i \leq -\eta_i \beta_i.
\end{align}
Integrating the above differential inequality we get that
\begin{align}
    \beta_i(t) = \sigma^\star_i - (H(t)\alpha(t))_i \le C e^{-\eta_i t},
\end{align}
for some constant $C > 0$ for all large enough $t$. This shows that $H \alpha$ converges to $\sigma^\star$ at an exponential rate.

\textbf{Definition.} In the complement of Case 1, we define the subset $S \subseteq [K]$ such that $j \in S$ implies that $\lim_{t \rightarrow \infty} \alpha_j(t) = +\infty$.

In order to deal with the complement of Case 1 above, we 
now proceed to handle the convergence of $H\alpha$ to $\sigma^*$ coordinate-wise. To this end, we consider two types of coordinates $i \in [d_1]$, depending on the limiting behavior of the $H_{ij}$ in tandem with that of $\alpha_j$ (as $j$ ranges over $S$ for this $i$). 

\textbf{Case 2 :} The index $i \in [d_1]$ is such that $H_{ij} \alpha_j \to 0$ as $t \to \infty$ for all $j \in S$.  In this case, we consider the left and right sides of Eq \ref{eq:Halphadot} for the $i$-th co-ordinate, and notice that in fact we have $\sigma_i^* = \lim_{t \to \infty} \sum_{j \notin S } H_{ij}\alpha_j$. Since  $\alpha_j$ for each $j \notin S$ converges to a finite real number, this implies that for this particular  index $i \in [d_1]$ we can employ the argument of Case 1, with a lower dimensional vector $(\alpha_j)_{j \notin S}$ instead of $(\alpha_j)_{j \in  [K]}$.

\textbf{Case 3 :} The index $i \in [d_1]$ is such that $\exists j \in S$ with  $H_{ij} \alpha_j $ does not converge to zero as $t \to \infty$ and $H_{ij}$ remains bounded away from 0 for this $j$. In this case, we notice that the $i$-th row of $H$, denoted $h_i$, satisfies the condition that $\|h_i\|_2$ is bounded away from $0$ as $t \to \infty$, thanks to the above co-ordinate $j$. As such, we are able to apply the exponential decay argument of Case 1 by combining Eq \ref{eq:lowerbound-case1} and Eq \ref{eq:diff_ineq}.

\textbf{Case 4 :} The index $i \in [d_1]$ is such that $\exists j \in S$ with  $H_{ij} \alpha_j $ does not converge to zero as $t \to \infty$ and $H_{ij} \to 0$ for this $j$. For the proof of Case 4 in Theorem 2, we further assume that if $x_0$ is a zero of the activation function $\gamma$, then $\gamma^\prime(x) \ge c \gamma(x)$ for $x$ sufficiently close to $x_0$, and $c>$ being a constant. This covers the case of $x_0=-\infty$, as in the case of sigmoid functions, where the condition would be taken to be satisfied for $x$ sufficiently large and negative. This condition is satisfied by nearly all of the activation functions of our interest, including sigmoid functions, the tanh function, and truncated ReLu with quadratic smoothing, among others. We believe that this condition is a technical artifact of our proof, and endeavor to remove it in an upcoming version of the manuscript.

In this setting, we invoke Eq \ref{eq:Halphadot} in its $i$-th component and lower bound its right-hand side by the $k=j$ summand therein; in other words, we write 
\begin{align} \label{eq:Halphadot-case4}
    \dot{(H \alpha)_i} &\ge  4 \alpha_j^2  \gamma^\prime(z^{(j)}_i)^2  [\Cc (\sigma^\star - H \alpha)]_i  w^{(k)}_i. 
\end{align}
Since $H_{ij}=\gamma(z^{(j)}_i)$, therefore for large enough $t$, we have $z^{(j)}_i$ is close to $z_0$, the zero of $\gamma$ (in the event $z_0 = -\infty$, this means that $z^{(j)}_i$ is large and negative). By the properties of the activation function $\gamma$, this implies that the inequality $\gamma^\prime(z^{(j)}_i) \ge c \gamma(z^{(j)}_i)$ holds true with an appropriate constant $c>0$ for large enough $t$. Notice that $\alpha_j \uparrow +\infty$ and from Eq \eqref{eq:hdot-rhs-entrywise} we have $\partial_t H_{ij} \ge 0$ for large enough time $t$, which implies that $H_{ij}$ is non-decreasing for large time. As a result, $H_{ij}\alpha_j$ is
non-decreasing for large time. But recall that $H_{ij} \alpha_j $ does not converge to zero as $t \to \infty$  by the defining condition of this case, which when combined with the non-decreasing property established above, implies that $H_{ij} \alpha_j $ is bounded away from $0$ for large time.  Finally, recall that $w^{k}_i \ge \sqrt{|Q|}$.  Combining these observations, we may deduce from Eq \ref{eq:Halphadot-case4} that, for an appropriate constants $c_1,c_2>0$ we have 
\begin{align} \label{eq:Halphadot-case4-b}
    \dot{(H \alpha)_i} &\ge  c_1 \alpha_j^2  \gamma(z^{(j)}_i)^2  [\Cc (\sigma^\star - H \alpha)]_i  w^{(k)}_i \ge  c_2 \cdot  [\Cc (\sigma^\star - H \alpha)]_i .
\end{align}
We now proceed as in Eq \ref{eq:diff_ineq} and obtain exponentially fast convergence, as desired.

Establishing a matching exponential lower bound on $\beta_i(t)$ is not very important from a practical point of view. So, we only show such a bound under the assumption that the $\alpha_k(t)$ remains bounded by some  constant $\alpha_{\max} > 0$, for all $k$. For simplicity, we also assume that the non-linearity $\gamma$ is such that $\gamma'(x) = O\big(\frac{1}{\sqrt{|x|}}\big)$ as $x \rightarrow \pm \infty$. (This is a mild assumption, satisfied, e.g., by the logistic or the tanh non-linearities.) Since $w^{(k)}_i$ can grow at most linearly in $z^{(k)}_i$ (see Eq. \eqref{eq:w-z-conserve}), this assumption ensures that $\gamma'(z^{(k)}_i)^2 w^{(k)}_i$ remains bounded uniformly for all $i$ and $k$. Further, the entries of $H$ are uniformly bounded, and $\Cc$ is a fixed matrix. Therefore, for some constant $\zeta > 0$, we have, for all $i \in [d_1]$, that
\begin{align*}
    \dot (H\alpha)_i &= 4 \sum_{k = 1}^K \alpha_k^2 (\gamma'(z^{(k)}_i))^2 w^{(k)}_i (\Cc (\sigma^\star - H\alpha)_i + \sum_{j} (HH^\top)_{ij} (C(\sigma^\star - H\alpha))_j \\
    &\le \zeta \sum_{\ell} (\sigma^\star - H\alpha)_\ell.
\end{align*}
A fortiori,
\begin{align*}
    \dot \sum_i \beta_i = \sum_i \dot \beta_i &= -\sum_i \dot (H\alpha)_i \\
    &\ge - \zeta \sum_i \sum_\ell \beta_\ell \\
    &= - d_1 \zeta \sum_{i} \beta_i.
\end{align*}
Integrating this differential inequality, we conclude that for some constant $C_1 > 0$,
\[
    \sum_{i} \beta_i(t) \ge C_1 e^{- d_1\zeta t},
\]
for all $t > 0$. The Cauchy-Schwartz bound $\sum_{i} \beta_i(t) \le \sqrt{d_1} \|\beta(t)\|_2$ then implies that for all $t > 0$,
\[
    \|\sigma^\star - H(t)\alpha(t)\|_2 = \|\beta(t)\|_2 \ge \frac{C_1}{\sqrt{d_1}} e^{-d_1\zeta t} = C e^{-\eta t}, 
\]
where $C = \frac{C_1}{\sqrt{d_1}}$ and $\eta = d_1 \zeta$. This completes the proof of the lower bound.

\section{Proof of Theorem \ref{thm:implicit-regularization}}
\label{app:proof-thm-implicit-regularization}

By our definition of $X_\infty$, we have
\begin{align*}
    X_\infty &= \lim_{t \rightarrow \infty} X(t) \\
    &= \lim_{t \rightarrow \infty} \sum_{k=1}^{K} \alpha_k(t) \Gamma\big(U_k(t) V_k(t)^\top\big) \\
    &= \lim_{t \rightarrow \infty} \sum_{k=1}^{K} \alpha_k(t) \Gamma\big(\Phi \Ubar_k(t) G \cdot G^\top \Vbar_k(t)^\top \Psi^\top\big) \\
    &= \lim_{t \rightarrow \infty} \sum_{k=1}^{K} \alpha_k(t) \Gamma\big(\Phi \Ubar_k(t) \Vbar_k(t)^\top \Psi^\top\big) \\
    &= \lim_{t \rightarrow \infty} \sum_{k=1}^{K} \alpha_k(t) \Phi \gamma(\Ubar_k(t)\Vbar_k(t)^\top \Psi^\top \\
    &= \Phi \bigg[\lim_{t \rightarrow \infty} \sum_{k=1}^{K} \alpha_k(t) \gamma\big(\Ubar_k(t)\Vbar_k(t)^\top\big)\bigg] \Psi^\top \\
    &= \Phi \bigg[\lim_{t \rightarrow \infty} \Diag\Big( \sum_{k=1}^{K} \alpha_k(t) \gamma\big(z^{(k)}(t)\big)\Big)\bigg] \Psi^\top \\
    &= \Phi \bigg[\lim_{t \rightarrow \infty} \Diag\Big( H(t) \balpha(t)\Big)\bigg] \Psi^\top \\
    &= \Phi \bigg[\Diag\Big(\lim_{t \rightarrow \infty}  H(t) \balpha(t)\Big)\bigg] \Psi^\top \\
    &= \Phi \Big[\Diag(
    \sigma^\star)\Big] \Psi^\top,
\end{align*}
where the fourth equality is due to the orthogonality of $G$, and the last equality is due to Theorem~\ref{thm:gradent-flow}. Now, let us look at the prediction given by $X_\infty$, for any $i = 1,\dots,m$, we have
\begin{align}
    \langle A_i, X_\infty \rangle &= \langle \Phi \Abar_i \Psi^\top, \Phi \Diag(\sigma^\star) \Psi^\top \rangle
    = \langle \sigma^{(i)}, \sigma^\star\rangle
    = y_i,
\end{align}
where the last equality is due to Equation~\ref{eq:y_i}.This implies that $\ell(X_\infty) = 0$, proving (a). 

To prove (b), we will show that $X_\infty$ satisfies the Karush-Kuhn-Tucker (KKT) conditions of the following optimization problem 
\begin{equation} \label{eq:nuclear-norm-opt-b}
    \min_{X \in \RR^{d_1 \times d_2}} \lVert X \rVert_* \quad \textrm{subject to} \quad y_i = \langle A_i, X \rangle, \forall i \in [m].
\end{equation}
The KKT optimality conditions for the optimization in \ref{eq:nuclear-norm-opt-b} are
\begin{equation}
    \exists \nu \in \RR^{m} \quad \textrm{ s.t. } \quad \forall i \in [m], \quad \langle A_i, X \rangle = y_i \quad \textrm{and} \quad \nabla_X \lVert X \rVert_* + \sum_{i=1}^{m} \nu_i A_i = 0.
\end{equation}
The solution matrix $X_\infty$ satisfies the first condition from the first claim of Theorem~\ref{thm:implicit-regularization}, it remains to prove that $X_\infty$ also satisfies the second condition. The gradient of the nuclear norm of $X_\infty$ is given by
\begin{equation}
    \nabla \lVert X_\infty \rVert = \Phi \Psi^\top.
\end{equation}
Therefore, the second condition becomes
\begin{align*}
    &\Phi \Psi^\top + \sum_{i=1}^{m} \nu_i A_i = 0 \\
    \Leftrightarrow\quad &\Phi \Psi^\top + \sum_{i=1}^{m} \nu_i \Phi \Abar_i \Psi^\top = 0 \\
    \Leftrightarrow\quad &\Phi \Big( I - \sum_{i=1}^{m} \nu_i \Abar_i \Big) \Psi^\top = 0 \\
    \Leftrightarrow\quad &\sum_{i=1}^{m} \nu_i \sigma_j^{(i)} = 1, \forall j \in [d_1] \\
    \Leftrightarrow\quad &B \nu = \openone.
\end{align*}
By Assumption~\ref{assump:1}, the vector $\openone$ lies in the column space of $B$, which implies the existence of a vector $\nu$ that satisfies the condition above. This proves (b) and concludes the proof of Theorem~\ref{thm:implicit-regularization}.

\section{Singular value decomposition}
In this appendix, we explain in detail the singular value decomposition of a rectangular matrix. By doing so, we also explain some of the non-standard notations used in the paper (e.g., a rectangular diagonal matrix).

In our paper, we consider a rectangular measurement matrix $A$ of dimension $d_1 \times d_2$ and rank $r$. Without loss of generality, we assume that $r \leq d_1 \leq d_2$. Then the singular value decomposition of the matrix $A$ is given by
\begin{align}
    A = \Phi \Abar \Psi^\top,
\end{align}
where $\Phi \in \RR^{d_1 \times d_1}, \Psi \in \RR^{d_2 \times d_1}$ are two orthogonal matrices whose columns represent the left and right singular vectors of $A$, and $\Abar \in \RR^{d_1 \times d_1}$ is a diagonal matrix whose diagonal entries represent the singular values of $A$.

This is similar to the notion of \emph{compact SVD} commonly used in the literature, but we truncate the matrix $\Psi$ to $d_1$ columns, instead of $r$ columns like in compact SVD.

This choice of compact SVD is not taken lightly, it is crucial for the proof of Theorem \ref{thm:gradent-flow} via the use of the Khatri--Rao product. More specifically, the Khatri--Rao product in Eq. \ref{eq:khatri-rao} requires that $\Phi$ and $\Psi$ have the same number of columns. This is generally not true in the standard singular value decomposition.

Next, we precisely define the notion of a rectangular diagonal matrix. Suppose $A \in \RR^{d_1 \times d_2}$ with $d_1 < d_2$ is a diagonal matrix, then $A$ takes the following form:
\begin{align}
    A = \begin{bmatrix}
        a_1 & \cdots & 0 & 0 & \cdots & 0 \\
        \vdots & \ddots & \vdots & \vdots & \ddots & \vdots \\
        0 & \cdots & a_{d_1} & 0 & \cdots & 0 \\
    \end{bmatrix}.
\end{align}
Here, we start with a standard square diagonal matrix of size $d_1 \times d_1$, and add $d_2 - d_1$ columns of all $0$'s to the right of that matrix. Similarly, we define a rectangular identity matrix of size $d_1 \times d_2$ by setting $a_1 = \dots = a_{d_1} = 1$.

\section{Code availability}
The python code used to conduct our experiments is publicly available at \url{https://github.com/porichoy-gupto/spectral-neural-nets}.

\end{document}